\pdfoutput=1
\documentclass[10pt,twocolumn]{article}

\usepackage[T1]{fontenc}
\usepackage[utf8]{inputenc}
\usepackage{lmodern}
\usepackage[protrusion=true,expansion=false]{microtype}

\usepackage[letterpaper,margin=0.75in]{geometry}
\usepackage{parskip}
\usepackage{amsmath}
\usepackage{amssymb}
\usepackage{amsthm}
\usepackage{mathtools}

\usepackage{graphicx}
\usepackage{booktabs}
\usepackage{array}
\usepackage{tabularx}
\usepackage{float}
\usepackage{caption}
\usepackage[table]{xcolor}

\usepackage{listings}
\lstdefinestyle{faop}{
  basicstyle=\ttfamily\small,
  breaklines=true,
  columns=fullflexible,
  keepspaces=true,
  showstringspaces=false,
  frame=single,
  framesep=4pt,
  rulecolor=\color{gray!50},
  backgroundcolor=\color{gray!6},
  xleftmargin=2pt,
  xrightmargin=2pt,
  aboveskip=6pt,
  belowskip=6pt,
}
\usepackage{titlesec}
\titlespacing*{\section}{0pt}{1.1ex plus 0.3ex minus 0.2ex}{0.5ex plus 0.1ex}
\titlespacing*{\subsection}{0pt}{0.8ex plus 0.2ex minus 0.1ex}{0.3ex plus 0.1ex}
\titlespacing*{\subsubsection}{0pt}{0.6ex plus 0.2ex minus 0.1ex}{0.2ex plus 0.1ex}
\usepackage[colorlinks=true,linkcolor=blue!60!black,citecolor=blue!60!black,urlcolor=blue!60!black]{hyperref}

\usepackage[numbers,sort&compress]{natbib}
\newtheorem{theorem}{Theorem}
\newtheorem{lemma}{Lemma}
\newtheorem{proposition}{Proposition}
\newtheorem{definition}{Definition}
\newtheorem{convention}{Convention}

\newcommand{\Sin}{S_{\mathit{in}}}
\newcommand{\Sout}{S_{\mathit{out}}}
\newcommand{\Sopt}{S_{\mathit{opt}}}
\newcommand{\seqop}{\mathit{seq}}
\newcommand{\parop}{\mathit{par}}
\newcommand{\ifop}{\mathit{if}}
\newcommand{\loopop}{\mathit{loop}}
\newcommand{\callop}{\mathit{call}}
\newcommand{\Det}{\operatorname{Det}}
\newcommand{\Comp}{\operatorname{Comp}}
\newcommand{\leavesop}{\operatorname{leaves}}
\newcommand{\predsop}{\operatorname{preds}}

\newcommand{\nopsym}{\mathrm{nop}}
\newcommand{\opHTTP}{\mathrm{op}_{\mathrm{HTTP}}}
\newcommand{\opDB}{\mathrm{op}_{\mathrm{DB}}}
\newcommand{\opLLM}{\mathrm{op}_{\mathrm{LLM}}}
\newcommand{\opTool}{\mathrm{op}_{\mathrm{tool}}}
\newcommand{\opFile}{\mathrm{op}_{\mathrm{file}}}
\newcommand{\opQueue}{\mathrm{op}_{\mathrm{queue}}}
\newcommand{\opRet}{\mathrm{op}_{\mathrm{ret}}}
\newcommand{\opRag}{\mathrm{op}_{\mathrm{rag}}}
\newcommand{\opPlan}{\mathrm{op}_{\mathrm{plan}}}
\newcommand{\opCompute}{\mathrm{op}_{\mathrm{compute}}}
\newcommand{\sem}[1]{\mathopen{[\![}#1\mathclose{]\!]}}

\newcommand{\keybox}[1]{%
  \par\smallskip\noindent
  \setlength{\fboxsep}{7pt}%
  \colorbox{blue!7}{\parbox{\dimexpr\linewidth-2\fboxsep\relax}{#1}}%
  \par\smallskip}

\usepackage{fancyhdr}
\newcommand{\CopyrightLine}{}
\fancypagestyle{plain}{%
  \fancyhf{}%
  \fancyfoot[L]{\footnotesize\color{gray!60}\CopyrightLine}%
  \fancyfoot[R]{\footnotesize\color{gray!60}\thepage}%
}

\title{\textbf{\Large FAOPs: Normal Forms and an Algebra for AI Agent Workflows}}
\author{%
{\large Ali M. Shanghooshabad}\\[0.35em]
\normalsize JPMorgan Chase \& Co.\\[0.35em]
\normalsize\texttt{shanghoosh@gmail.com}\quad\texttt{ali.shanghooshabad@jpmorgan.com}%
}
\date{}

\begin{document}

\makeatletter
\twocolumn[
\begin{@twocolumnfalse}
\maketitle

\begin{abstract}

Organizations increasingly encode operational procedures as multi-step agent \emph{skills}, here called Agent Operating Procedures (AOPs), that must be traceable, auditable, reproducible, access-controlled, and bounded in time and cost, as regulated sectors require. Yet AOPs are typically kept as coarse-grained files, in which steps bundle actions, copied procedures drift apart, any step may reach any tool, and refactoring goes unchecked. We present \textbf{Functional Agent Operating Procedures (FAOP)}, a formal theory that brings database-grade discipline to AI agent workflows. FAOP recasts each workflow as a \emph{typed function over scoped state} and subjects an entire library to four progressive \textbf{normal forms}: Decomposed (FNF-1) splits a workflow into atomic steps, Typed (FNF-2) makes its data flow explicit and type-checked, Factored (FNF-3) gives each shared sub-workflow a single canonical definition so a change propagates instead of drifting, and Scoped (FNF-4) checks that each FAOP lists resources used by its own actions and the FAOPs it calls, while guardrails declared by it and by FAOPs that call it remain attached to their sources. Once normalized, every atomic step is a separate entity with a declared contract and can be type-checked, scope-confined, bounded, tested, traced, and audited on its own. A \textbf{FAOP algebra} of provably sound equalities turns refactoring and parallelization into compiler rewrites; orthogonal extensions add optimization (+O), bounded completion with time and cost limits (+B), and reachability checks (+R); and an architectural-level classifier (0--3) determines whether a workflow needs AI at all. Together, these make integrity, isolation, typing, stopping limits, cost, and optimization compile-time concerns across the whole library, not runtime hopes. We implement the theory in \textbf{SkillBase}, an AOP management system (what PostgreSQL is to data, SkillBase is to AOPs), and a separate executor that loads FAOPs from SkillBase and enforces these guarantees at run time. A worked financial workflow illustrates the approach end to end.
\end{abstract}

\vspace{1.5em}

\end{@twocolumnfalse}
]
\makeatother

\section{Introduction}

AI agents are entering the operational core of organizations, most visibly in regulated industries. A bank runs an agent to onboard a client through KYC, an insurer to investigate a claim for fraud, a hospital to secure prior authorization for a procedure, a law firm to review a contract for non-standard clauses, and so on. In each case a decision is acceptable only when the institution can show exactly how it was reached (which steps ran, in what order, on which data, under which permissions, and with what result), can reproduce it for a regulator, and can bound its time and cost. These are conditions of deployment, not optional qualities, yet none is supplied by the language model itself; each is a property of the multi-step \emph{workflow} built around it.

The reusable unit of agent behavior is the \emph{skill}: an artifact stored outside the model's weights that tells an agent how to perform a task, and that can be versioned, retrieved, and executed without retraining \citep{badhe2026survey,lu2025octotools}. Skills range from a single wrapped tool call to a \emph{composite} procedure with its own control flow \citep{badhe2026survey}. This paper concerns the composite, operational kind: the multi-step procedure an institution would otherwise write as a Standard Operating Procedure (SOP) for a person to follow. We call its agent-executable form an \emph{Agent Operating Procedure} (AOP).\footnote{\emph{AOP} denotes Agent Operating Procedure, unrelated to aspect-oriented programming.} An AOP is thus a workflow-shaped skill, and its steps are themselves skills: primitive actions, or calls to other AOPs.

Every AOP has the same shape:

\[
\begin{aligned}
\text{AOP}=\langle&\text{name},\text{description},\text{inputs},\\
&\text{outputs},\text{body}\rangle .
\end{aligned}
\]

The name and description make it discoverable; the inputs and outputs type its interface; the body holds the ordered steps.

Figure~\ref{fig:executor-pattern} shows how an AOP is run. A single generic agent executor (commonly a loop of reason, act, and observe \citep{yao2023react}, though the exact implementation varies by team) takes two inputs that change per workflow: an AOP describing the steps, and a list of tools the agent may call. A deployed system usually adds a user or agent interface and a gateway that mediates access to external agents and tools.

\begin{figure}[t]
  \centering
  \includegraphics[width=0.95\linewidth]{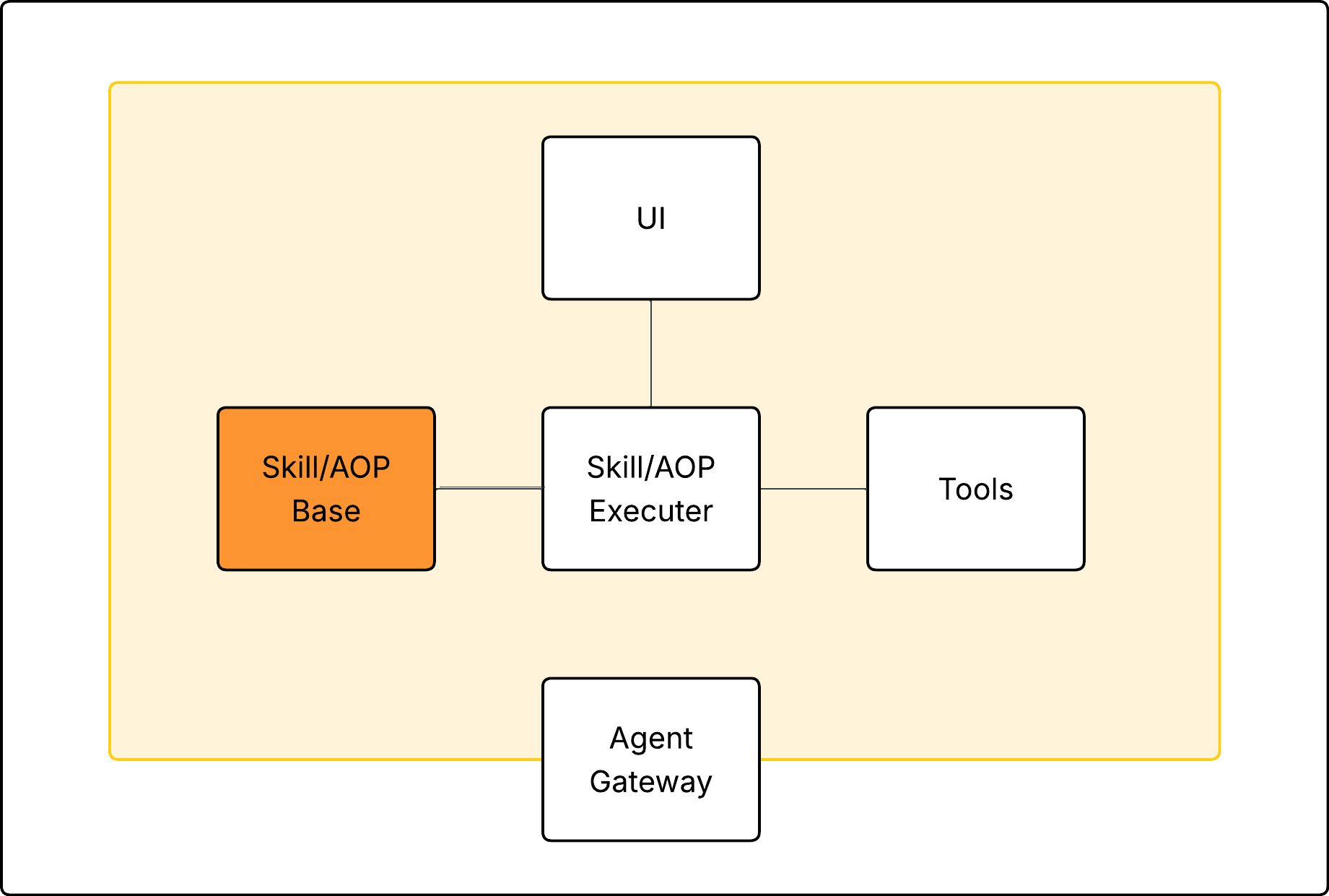}
  \caption{The AOP executor pattern. A single generic agent executes any workflow; only the AOP body and the tool list change per use case.}
  \label{fig:executor-pattern}
\end{figure}

Because an AOP lives outside the model's weights, proprietary procedures stay confidential rather than baked into external weights, and a procedure survives model upgrades and provider changes. Because the executor never changes, \emph{what} a workflow does is separated from \emph{how} it runs: the same executor runs onboarding, compliance checks, payment processing, and more; shipping a new agent collapses to authoring one AOP and its tool list, with no bespoke orchestration code to write, test, or deploy; and the fixed executor is the single point at which typing and tool access can be enforced. And because an AOP can call other AOPs, a shared sub-workflow can be written once and reused across workflows rather than reimplemented. These properties are why organizations build \emph{libraries} of reusable AOPs that agents draw on at runtime, increasingly populated not only by hand but by generators that emit many from natural-language descriptions or existing operational knowledge (Section~\ref{sec:prompt-engineering}). How an AOP is selected from such a library at run time is a separate problem, studied as skill routing \citep{zheng2026skillrouter}; we do not address it here. Our subject is the body: its structure, its composition, and the guarantees a compiler can establish over it before the workflow runs.

This design works in the small but strains at scale. A bank may run thousands of AOPs, yet in practice their shared sub-workflows (client lookup, compliance checks, document validation) are copy-pasted as text, so a single new compliance rule must be re-applied by hand in many places.

\subsection{The Problem: AOPs at Scale}
\label{sec:problem}
AOPs are commonly authored, tested, deployed, observed, and governed as opaque monoliths, so nothing below the whole workflow can be reused, type-checked, tested, scoped, or traced on its own. As a library grows, five kinds of defect follow:
\begin{itemize}
\item \textbf{Integrity across the library.} Copies of shared sub-workflows drift out of sync; inter-AOP dependencies are invisible, so a change breaks dependents without warning; and with no shared vocabulary (\texttt{client\_id} versus \texttt{id\_client}) nothing composes.
\item \textbf{Correctness within a workflow.} State is often untyped, so a step can read a field no earlier step produced; loops may run unbounded; branches may be unreachable and steps may compute results that nothing uses; and no time or cost bound is declared or checked.
\item \textbf{Isolation.} Tools and data are commonly granted to the AOP as a whole rather than to the steps that need them, and the only ``boundary'' is a comment the model may or may not honor, so a single prompt injection in one step can reach all of them.
\item \textbf{Composition.} Reordering, parallelizing, or extracting steps into a shared AOP is done by hand, with no proof that the workflow still does the same thing.
\item \textbf{Governance.} Whether a workflow needs AI at all (no model, a model as a tool, or a planner) is decided by intuition, with no formal basis.
\end{itemize}
Two of these defects are already measured in public skill marketplaces. For integrity, a study of 40{,}285 published skills finds widespread duplication: the same user intent is often published as several separate skills \citep{ling2026claudeskills}. For isolation, a security analysis of 31{,}132 skills finds that 26.1\% contain at least one vulnerability, most often data exfiltration (13.3\%) and privilege escalation (11.8\%) \citep{liu2026skillswild}.

\subsection{A Relational Analogy}

\begin{table*}[t]
\centering
\caption{A flat, pre-normalization loan file.}
\label{tab:loan-flat}
\small
\begin{tabularx}{\linewidth}{@{}l l X l l l r@{}}
\toprule
LoanID & Customer & Address & Phone & Branch & BranchMgr & Amount \\
\midrule
L001 & Alice & 10 Main St & 555-1234 & Downtown & Diana & 50,000 \\
L002 & Alice & 10 Main St & 555-1234 & Downtown & Diana & 25,000 \\
L003 & Carol & 7 Oak Ave & 555-5678 & Uptown & Dave & 80,000 \\
\bottomrule
\end{tabularx}
\end{table*}

AOP libraries are not the first to fail this way. Before Codd \citep{codd1970}, data lived in flat files in which every row repeated everything (Table~\ref{tab:loan-flat}). Such files had four structural defects: \emph{redundancy} (Alice's address and branch manager repeat on every row), \emph{broken integrity} (update one row, miss another, and the file contradicts itself), \emph{uncontrolled access} (anyone who reads the file reads everything), and \emph{no optimization} (every query scans every row). The answer, begun by Codd, was not a better file but a \emph{theory}: normal forms, a relational algebra, and schemas that enforce integrity and access by construction.

An AOP library has the same four defects: copied procedures (redundancy), copies that drift apart (broken integrity), steps that may reach any tool (uncontrolled access), and no checked way to refactor or parallelize (no optimization). \textbf{FAOP}, the theory this paper introduces, gives it the same three answers (normal forms, an algebra, and typed, scoped contracts in place of schemas) and the same new role: the \emph{Skill Architect}, who designs, normalizes, and governs an organization's AOP library, the DBA of the agentic era. In this analogy, FAOP is relational theory and SkillBase, our implementation (Section~\ref{sec:toolchain}), is PostgreSQL. The theory stands alone and can be applied by hand; the tooling makes it practical at scale.

\subsection{Contributions}

Our contributions are as follows:

\begin{enumerate}
\item A formal definition of FAOPs as typed, composable functions over scoped state, with explicit scope boundaries per step (Section~\ref{sec:formal-framework}).
\item Four progressive normal forms (FNF-1 through FNF-4) that incrementally guarantee atomicity, type safety, composition without duplication over an acyclic call graph that removes the recursion anomaly (a workflow that silently calls itself and never terminates), and scope validation: FNF-4 checks that each FAOP lists the resources required by its own actions and the FAOPs it calls, and preserves guardrail declarations from FAOPs that call it (Section~\ref{sec:normal-forms}). Additional behavioral properties, namely semantics-preserving optimization (+O), bounded completion with time and cost limits (+B), and reachability checks (+R): every branch can run under the declared contracts, and every computed result is used, are formalized as orthogonal extensions on top of the four normal forms.
\item A FAOP algebra with equational laws (commutativity of independent steps, associativity of sequential composition, branch distribution, and fold/unfold) that enable meaning-preserving refactoring and automatic parallelization of independent steps (Section~\ref{sec:algebra}).
\item An architectural-level classification (Level 0--3) computed from the normalized body, distinguishing a deterministic workflow, an LLM-as-tool agent, and selector/generator planners, giving organizations a formal basis for deciding whether a use case needs AI and what governance it calls for (Section~\ref{sec:architectural-levels}).
\item SkillBase, a reference implementation of an \emph{AOP management system} (a vocabulary, a compiler, and a registry). The compiler checks every normal form and extension before any runtime artifact is generated, and compiles a normalized AOP into an executable blueprint for the FAOP executor; the registry's impact analysis identifies every affected workflow when a shared AOP changes, so each is notified and reviewed again before adopting the change (Section~\ref{sec:toolchain}).
\end{enumerate}

\section{The FAOP Formalism}
\label{sec:formal-framework}

This section gives the formal definition of FAOPs, the primitive operators they are built from, and how those primitives combine into composite FAOPs.

\subsection{Definition}

Let $\mathcal{V}$ be a fixed vocabulary of typed field names. A \emph{state} $\sigma$ is a finite map from a subset of $\mathcal{V}$ to typed values, and a \emph{schema} $S\subseteq\mathcal{V}$ is a set of fields, such as the fields a step reads or writes.

\begin{definition}[FAOP]\label{def:faop}
A \emph{Functional Agent Operating Procedure} is an 8-tuple
\[
F=(\text{id},\ n,\ d,\ \Sin,\ \Sout,\ \Sopt,\ b,\ \Sigma),
\]
where $\text{id}$ is a unique versioned key in the \emph{registry} $\mathcal{R}$, the normalized library of FAOPs; $n$ and $d$ are a human-readable name and a semantic description used for discovery; $\Sin$, $\Sout$, and $\Sopt$, also written $\Sin(F)$, $\Sout(F)$, and $\Sopt(F)$, are its declared \emph{interface}: the fields it reads from its caller, the fields it always returns, and its optional outputs, which it returns only on some paths (Convention~\ref{conv:accumulation}); $b$, also written $b(F)$, is the \emph{body}, a term in the grammar of Section~\ref{sec:composite}, which FNF-2 checks against the interface (Section~\ref{sec:fnf2}); and $\Sigma$ is the \emph{scope}, comprising the resources named by the body and the guardrails declared for $F$ (Section~\ref{sec:fnf4}).
\end{definition}

\begin{convention}[State threading and scope]
\label{conv:accumulation}
The workflow carries a \emph{state}: a record mapping field names to values, threaded through the steps as in a functional program: each step reads the fields it needs and adds the fields it produces, then passes the updated record on. Every FAOP $F$ runs in its own state scope, like a function call: it sees only the fields its caller passes in ($\Sin(F)$) together with the local fields it computes itself, never another FAOP's internals, and it returns to the caller only its declared outputs: the fields of $\Sout(F)$, which it always produces, and any \emph{optional outputs} $\Sopt(F)$, fields it writes on only some paths, which return only when written. If two steps in the same body write the same field, the later one wins; we write $\sigma\uplus\sigma'$ for this update.
\end{convention}

\subsection{Primitive Operators}
\label{sec:primitives}

FAOP fixes six primitives (Table~\ref{tab:primitives}), together with a $\callop$ form that references a registered FAOP by identity. Let $f,g$ range over bodies and $p:S\to\{\texttt{true},\texttt{false}\}$ over typed predicates.

\begin{table}[!ht]
\centering
\caption{Primitive FAOP operators.}
\label{tab:primitives}
\small
\begin{tabularx}{\linewidth}{@{}l l X@{}}
\toprule
Primitive & Notation & Reading \\
\midrule
Action & $a:S\to S'$ & any single unit of work (for example, a tool call) \\
Sequence & $\seqop(f,g)$ & apply $f$, then $g$ \\
Branch & $\ifop(p,f,g)$ & apply $f$ if $p(\sigma)$, else $g$ \\
Loop & $\loopop(f,p)$ & apply $f$ until $p(\sigma)$ \\
Interrupt & $\iota(q,k)$ & suspend with query $q$; resume by passing the response to continuation $k$ \\
Checkpoint & $\chi$ & persist the running state to internal storage \\
Call & $\callop(\text{id})$ & call the registered FAOP $\text{id}$ in its own state scope (a function call) \\
\bottomrule
\end{tabularx}
\end{table}

\subsection{Composite FAOPs}
\label{sec:composite}

A \emph{body} is a term in the grammar
\[
\begin{aligned}
b ::= {}& a \mid \seqop(b_1,\ldots,b_n) \mid \ifop(p,b_1,b_2)\\
&{}\mid \loopop(b,p) \mid \iota(q,k) \mid \chi \mid \callop(\text{id}).
\end{aligned}
\]

A FAOP is a \emph{leaf} when its body has no sub-bodies (an action $a$, interrupt $\iota$, checkpoint $\chi$, or call $\callop(\text{id})$), and a \emph{composite} when its body is a $\seqop$, $\ifop$, or $\loopop$ combining sub-bodies. Each $\callop(\text{id})$ resolves, in the ambient registry $\mathcal{R}$, to the body of $\mathcal{R}[\text{id}]$.

\paragraph{Structural graphs.} The grammar and registry give rise to four graphs the paper keeps strictly separate. Three are structural: the \emph{body tree} is one FAOP's abstract syntax tree (internal nodes $\seqop$, $\ifop$, $\loopop$; leaves $a$, $\iota$, $\chi$, or $\callop(\text{id})$), finite and acyclic by construction; the \emph{execution graph} is a single runtime trace, in which cycles arise only from $\loopop$ and termination is governed by the loop predicate; and the \emph{call graph} below is the one later sections depend on. A fourth, the \emph{data-dependence graph}, puts an edge between two steps that conflict, whether through a record field or a shared external resource. It records which steps depend on which, so steps with no path between them are independent and may be reordered or run in parallel.

\begin{convention}[Call graph]
\label{conv:call-graph}
The call graph has one node per registered FAOP and an edge $F\to F'$ whenever $b(F)$ contains $\callop(\text{id}(F'))$, however deeply nested; a FAOP shared by several workflows has several incoming edges. The call graph must be acyclic: no FAOP may call itself, directly or through other FAOPs.
\end{convention}
Acyclicity costs no expressiveness: a recursive procedure can be rewritten as a $\loopop$ that keeps in the state what the recursion kept on its call stack, so its cycles live in the execution graph rather than the call graph. It also lets the compiler work callees-first when it checks, compiles, and answers impact queries, and it rules out recursive call cycles. It does not by itself bound execution: limits on loops, external calls, and waits come from +B.

\paragraph{Referential transparency.} Because $\mathcal{R}$ resolves each identifier to exactly one body, a call can be replaced by that body (inlining), and a registered body by a call to it (extraction), anywhere without changing $\Sin$, $\Sout$, or behavior; the inlined copy still runs in its own state scope (Convention~\ref{conv:accumulation}). Because the call graph is acyclic, inlining every call ends in a finite tree of primitives. We rely on this to justify the algebraic rewrites of Section~\ref{sec:algebra}.

\paragraph{Expressiveness.} The grammar is expressive enough to encode any workflow: by the structured program theorem of B\"{o}hm and Jacopini~\citep{bohm1966}, sequence, selection, and iteration over atomic actions (which FAOP provides as $\seqop$, $\ifop$, $\loopop$, and the action $a$) express any control flow drawable as a flowchart. The remaining FAOP primitives, $\iota$ and $\chi$, and the call form extend this core: $\iota$ adds external suspension and resumption for human or agent input, $\chi$ adds state persistence, and each $\callop(\text{id})$ abbreviates a registered body. None of these narrows what the core can express, so the B\"{o}hm--Jacopini completeness result still holds for FAOP.

\section{Normal Forms and Extensions}
\label{sec:normal-forms}

Normalization transforms an AOP library into an equivalent one, without changing the work it denotes, to remove a specific class of anomaly. Its input is the firm's whole library of source artifacts (Standard Operating Procedures, Agent Operating Procedures, or framework skills), typically many interconnected workflows that share steps, call one another, and evolve independently. Each is usually written as prose or thin pseudo-code for an LLM to follow, and the output is a registry of typed, composable FAOPs that a compiler can check. The four normal forms apply in order, each taking the previous as given. Like relational normal forms, which remove update, insertion, and deletion anomalies, each removes anomalies of its own:
\begin{itemize}
\item FNF-1 removes the \emph{compound-step anomaly} (a leaf that bundles several effects, so no single effect can be tested, traced, retried, or priced on its own) by splitting it into atomic steps.
\item FNF-2 removes the \emph{missing-field anomaly} (a step that reads a field no earlier step produced) by typing the data flow.
\item FNF-3 removes the \emph{duplication anomaly} (a shared sub-workflow copied across workflows and drifting out of sync, the workflow form of the update anomaly) by factoring it into one canonical FAOP the others call; it also rejects the \emph{recursion anomaly} (Section~\ref{sec:fnf3}).
\item FNF-4 checks that each FAOP lists the resources required by its own actions and the FAOPs it calls, while guardrails from FAOPs that call it remain attached to their declaring FAOP. This prevents incomplete resource declarations and keeps caller guardrails distinct when a shared FAOP is called from different places.
\end{itemize}

\textbf{Running example.} Examples in this section are drawn from one synthetic finance workflow, \texttt{issue\_statement.v1}, which issues a monthly account statement; Appendix~\ref{app:normalization-examples} works it end to end, one normal form at a time.

\subsection{FNF-1 Decomposed}
\label{sec:fnf1}
{
FNF-1 makes every step of a workflow small enough to stand on its own: each leaf may perform at most one operation that reaches outside it, such as one API call or one model call. In a prose AOP, a single step may hide several operations, and which ones it performs is known only by running it: the model reads the step and decides. FNF-1 makes every operation a separate, declared leaf. The compiler then knows before any run which leaves call a model and which do not, the executor can run every non-model leaf without the model, and each leaf can be tested, retried, bounded, and scoped on its own. FNF-1 takes a library of source artifacts (prose SOPs, AOPs, or framework skills) and produces a registry of FAOPs in which every leaf is atomic.

Atomicity is defined in terms of effects. An \emph{effect} is one operation a leaf performs on something outside itself, and every leaf is labeled with one symbol from a finite \emph{effect signature}; the baseline used in this paper is
\[
\begin{aligned}
\Sigma_E=\{&\opHTTP,\opDB,\opLLM,\opTool,\\
&\opFile,\opQueue,\opRet,\opPlan,\opCompute,\nopsym\}.
\end{aligned}
\]
The first six symbols name one HTTP request, one database read or write, one LLM call, one invocation of a registered tool, one file read or write, and one queue message. $\opRet$ names one read-only retrieval query against a knowledge base or search index, returning the retrieved items without calling a model; retrieval-augmented generation is then an $\opRet$ leaf followed by an $\opLLM$ generation step. A \emph{resource} is anything an action leaf names, such as a tool, endpoint, store, model, file, or queue; a value leaf names no external resource. $\opPlan$ marks a planning leaf, which produces a FAOP by selecting existing FAOPs or generating a new body (Section~\ref{sec:architectural-levels}), and $\nopsym$ is the empty step. $\opCompute$ marks a \emph{value leaf}: a pure function from the leaf's declared inputs to its declared outputs. It may do any amount of data work, such as translating one large JSON document into another, filtering or enriching records, or computing totals and flags, but it reads nothing beyond its inputs, writes nothing beyond its outputs, and calls no API, database, or model. A domain may add symbols for its own units of work: a managed retrieval-augmented generation endpoint, for instance, can be labeled $\opRag$ so that a single call that retrieves and generates behind one API is tracked as its own effect.

\begin{definition}[Atomic leaf]\label{def:atomic-leaf}
A leaf $\ell$, a FAOP in its own right with its own $\text{id}$, $n$, $d$, $\Sin$, and $\Sout$, is \emph{atomic} iff it performs at most one effect, $|\operatorname{effects}(\ell)|\leq 1$. An \emph{action leaf} performs one; a \emph{value leaf} performs none. In the normalized registry, every leaf other than a checkpoint is registered under its own id, and bodies refer to it by $\callop(\text{id})$; checkpoints are written inline.
\end{definition}

\begin{definition}[FNF-1]
Writing $\leavesop(F)$ for the leaves of the body of $F$,
\[
\operatorname{FNF\text{-}1}(\mathcal{R})\iff\forall F\in\mathcal{R}:\ \forall \ell\in\leavesop(F):\ \ell\text{ is atomic}.
\]
\end{definition}

\begin{lemma}[Lossless leaf split]\label{lem:leaf-split}
Let a leaf $\ell$ perform effects $e_1,\ldots,e_k$ in this order ($k>1$). Replacing $\ell$ by $\seqop(\ell_1,\ldots,\ell_k)$, where $\ell_i$ performs $e_i$ alone, does not change what the workflow does: the same effects run in the same order, on the same inputs, with the same outputs.
\end{lemma}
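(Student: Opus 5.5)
The plan is to model the compound leaf $\ell$ as a composition of state transformers and then compare it with the semantics of $\seqop$. Since $\ell$ performs $e_1,\ldots,e_k$ in order, its run on an input state $\sigma$ can be written as a chain $\sigma_0=\sigma$, $\sigma_i=\sigma_{i-1}\uplus\delta_i$. Here $\delta_i$ is the record produced by $e_i$, and it depends only on fields of $\sigma_{i-1}$, that is, on the original inputs together with what $e_1,\ldots,e_{i-1}$ wrote. I will define each $\ell_i$ to be exactly the $i$-th link of this chain. Its interface is
\[
\Sin(\ell_i)=\text{fields } e_i \text{ reads},\qquad \Sout(\ell_i)=\text{fields } e_i \text{ writes},
\]
and any intermediate value that was internal to $\ell$ is promoted to an explicit field. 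To avoid clashes, such fields get fresh names from $\mathcal{V}$.

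\textbf{Step 1.} Unfold the semantics of $\seqop(\ell_1,\ldots,\ell_k)$ under Convention~\ref{conv:accumulation}: apply $\ell_1$, then pass the updated record to $\ell_2$, and so on. By induction on $i$, the state after $\ell_i$ equals $\sigma_i$ restricted to the fields visible at that point. The base case is immediate. For the inductive step, $\ell_i$ receives a state that agrees with $\sigma_{i-1}$ on $\Sin(\ell_i)$, so it emits the same $\delta_i$, and $\uplus$ composes in the same way. This induction gives three facts at once: the same effects run, in the same order, on the same inputs.

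\textbf{Step 2.} Compare outputs. The original $\ell$ returns $\Sout(\ell)$. The split sequence returns the union of the $\Sout(\ell_i)$ under last-writer-wins, which coincides with $\ell$'s writes on every field of $\Sout(\ell)$. The leftover fields are the promoted intermediates, and they are not part of $\Sout(\ell)$. I will make them local to the enclosing FAOP scope. This is justified because Convention~\ref{conv:accumulation} returns only declared outputs to the caller, so the intermediates are invisible outside, and $\Sin$, $\Sout$, and $\Sopt$ of the enclosing FAOP are unchanged. Optional outputs need one more check. If $\ell$ wrote a field only on some paths, then the corresponding $\ell_i$ writes it on exactly the same paths, because its control comes from the same $\sigma_{i-1}$.

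\textbf{Main obstacle.} The hard step is justifying that the chain decomposition exists at all, that is, that $\ell$'s behavior factors through its effects with no hidden interleaving. Examples would be $e_2$ starting before $e_1$ finishes, or control inside $\ell$ choosing which effect runs next. I would first handle this through the lemma's hypothesis, which says the effects are performed "in this order", and read that as a strict sequential trace. Any internal control flow will be made explicit. A data-dependent choice between effects becomes an $\ifop$ around the corresponding $\ell_i$ and, by an easy extension of the induction, is preserved. Pure computation between effects is absorbed into the adjacent $\ell_i$ or emitted as an $\opCompute$ value leaf, which adds no effect and so keeps the same effects in the same order. If the paper's model allows genuinely concurrent effects inside a leaf, the lemma would need that case excluded as a hypothesis, and I would state it explicitly.
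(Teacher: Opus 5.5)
Your local argument is the paper's argument, made more precise. The paper also rests on three facts: a leaf already runs its effects one after another, unconditionally; $\seqop$ does the same with each effect in its own leaf; and any value passed internally from $e_i$ to $e_{i+1}$ becomes a field passed from $\ell_i$ to $\ell_{i+1}$. Your induction over the chain $\sigma_i$, and your bookkeeping of $\Sout$, optional outputs, and fresh names, spell this out. What you leave out is the second half of the paper's argument, the half that turns ``the leaf and the sequence agree'' into ``the workflow does not change.'' The split replaces one node of the body tree by a sequence in the same position and touches nothing around it. The surrounding constructs alone decide when and how often that position runs: only when its $\ifop$ arm is taken, once per iteration of an enclosing $\loopop$, and once for every caller if it sits in a shared FAOP. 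So the new sequence runs exactly when, and as often as, the old leaf did. Your Step~2 looks only at the return boundary of the enclosing FAOP, not at the constructs between the leaf and that boundary.

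In your setting this step is not a bare appeal to compositionality, because your local equivalence holds only up to the promoted intermediates. After $\seqop(\ell_1,\ldots,\ell_k)$, the enclosing body's state carries fields that $\ell$ never wrote. They stay visible to every later step, every later branch or loop predicate, and every later iteration of an enclosing loop, until a frame discards them. You need to say that none of these consumers can observe them: the names are fresh, and every step and predicate depends only on its declared inputs (the paper's honest-footprint assumption (A1), and (A3) for predicates). Then each predicate takes the same value and each later step emits the same record, and structural induction over the enclosing body lifts your local agreement to the whole workflow.

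A cleaner route is to keep $\ell$'s registered id and give it the composite body $\seqop(\ell_1,\ldots,\ell_k)$. Its call frame returns only $\Sout(\ell)\cup\Sopt(\ell)$, so the local equality becomes exact and no caller changes.

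Finally, the hypothesis as the paper reads it already settles your ``main obstacle'': the effects run unconditionally, one after another, which excludes both internal choice and concurrency. Wrapping some $\ell_i$ in an $\ifop$, or inserting extra $\opCompute$ leaves, would change the replacement term away from $\seqop(\ell_1,\ldots,\ell_k)$ and so prove a different statement. Of that extension, the lemma needs only the option of absorbing pure computation into an adjacent $\ell_i$.
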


The lemma holds because a leaf already runs its effects one after another, unconditionally, and $\seqop$ does the same with each effect in its own leaf; any value $\ell$ passed internally from one effect to the next becomes a field passed from $\ell_i$ to $\ell_{i+1}$. The split replaces one node of the body tree with a sequence in the same position and changes nothing around it. The surrounding constructs decide when that position runs and how often: only when its $\ifop$ arm is taken, once per iteration of an enclosing $\loopop$, and for every caller if it sits in a shared FAOP. The new sequence therefore runs exactly when, and as often as, the old leaf did.

A workflow reaches FNF-1 in two steps, performed by the Skill Architect (a human or an AI agent) and guided by the compiler. The architect first makes the steps explicit by rewriting the source in the grammar of Section~\ref{sec:composite} (sequential text becomes $\seqop$, alternatives $\ifop$, repetition $\loopop$, a human question $\iota$, a persistence point $\chi$, a known procedure $\callop(\text{id})$) and labels each leaf with its effect. This gives an initial registry $\mathcal{R}_0$ whose leaves may still bundle several effects; output from workflow generators such as AutoFlow \citep{li2024autoflow}, AFlow \citep{zhang2025aflow}, and Flow-of-Action \citep{pei2025flowofaction} enters at this point. The compiler then checks every FAOP in one $O(\|F\|)$ pass, where $\|F\|$ is the number of steps in the body of $F$, and reports each leaf that is not atomic:
\begin{lstlisting}
FNF1Check(F):
- for each leaf L in the body of F:
-   report L unless it declares exactly
  one effect symbol from Sigma_E
- accept iff no leaf is reported
\end{lstlisting}
The architect splits each reported leaf by the lemma, and the compiler checks again. The process terminates, since each split reduces the number of multi-effect leaves. By the lemma, every workflow in the result performs the same effects, in the same order, on the same inputs, with the same outputs as in $\mathcal{R}_0$; the only difference is that every leaf is now atomic.

The compiler checks the declared labels, not the code behind them; whether a leaf labeled $\opHTTP$ really issues one request is the architect's responsibility. In the running example, a step that reads both transactions and interest splits into two leaves (Appendix~\ref{app:fnf1}). All later normal forms reason over these atomic leaves, so most benefits in Table~\ref{tab:benefits} rest on FNF-1.
}

\subsection{FNF-2 Typed}
\label{sec:fnf2}
{
Every FAOP declares its interface before its body runs: $\Sin(F)$, the fields it requires, $\Sout(F)$, the fields it promises, and $\Sopt(F)$, the fields it may also return (Definition~\ref{def:faop}). FNF-2 is the condition that the body honors that interface on \emph{every} run: no step reads a field that has not yet been produced, and the body produces every promised output. For example, FNF-2 rejects a workflow that computes a balance before any step has fetched the transactions (Appendix~\ref{app:fnf2}).

The condition is decided by a standard forward dataflow analysis over the body~\citep{kildall1973,cousot1977}. It propagates the \emph{available schema} $S$: the typed fields guaranteed to be in the state on every run that reaches a given point. Writing $\mathcal{A}(b,S)$ for the schema available after body $b$ entered with $S$:
\[
\begin{aligned}
\mathcal{A}(\ell,S)&=S\cup\Sout(\ell) &&\text{(leaf)}\\
\mathcal{A}(\callop(\text{id}),S)&=S\cup\Sout(\mathcal{R}[\text{id}]) &&\text{(call)}\\
\mathcal{A}(\seqop(b_1,\ldots,b_n),S)&=\mathcal{A}(b_n,\ldots\mathcal{A}(b_1,S)) &&\text{(seq)}\\
\mathcal{A}(\ifop(p,f,g),S)&=\mathcal{A}(f,S)\cap\mathcal{A}(g,S) &&\text{(if)}\\
\mathcal{A}(\loopop(f,p),S)&=\mathcal{A}(f,S). &&\text{(loop)}
\end{aligned}
\]
A step or predicate $x$ reached with available schema $S$ is well-typed iff $\Sin(x)\subseteq S$. The $(\text{call})$ rule reads the callee's declared $\Sout$ and never its body, so each FAOP is checked once against its interface, as in separate compilation. It ignores optional outputs, since a call returns them only on some paths.

\begin{definition}[FNF-2]\label{def:fnf2}
$\operatorname{FNF\text{-}2}(\mathcal{R})$ iff for every $F\in\mathcal{R}$, applying these rules to $b(F)$ from $\Sin(F)$,
\begin{enumerate}
\item every step and predicate it reaches is well-typed there, and
\item $\Sout(F)\subseteq\mathcal{A}(b(F),\Sin(F))$.
\end{enumerate}
\end{definition}

The two conditions of Definition~\ref{def:fnf2} are the workflow forms of two rules a statically checked language such as Java enforces~\citep{gosling2021jls}. Condition~1, $\Sin(x)\subseteq S$, is \emph{definite assignment}: a variable may be read only where it is assigned on every path reaching the read. Condition~2 is the rule that a method declaring a result may not complete without returning one. Definite assignment bites hardest at a branch, since a field produced by one arm alone is not in the schema after the join: it is available to the rest of that arm but not past it. The remedies are the usual ones, namely move the reader into that arm, have the other arm produce the field too, or produce a default before the branch. A checker could instead keep such a field available past the join by remembering the predicate under which it was produced, admitting a read only under a later test of that same predicate; this is occurrence typing~\citep{tobinhochstadt2010}. FNF-2 does not do so, and Proposition~\ref{prop:fnf2} holds without it.

Two facts justify the $(\text{loop})$ equation, $\mathcal{A}(\loopop(f,p),S)=\mathcal{A}(f,S)$. Checking the body once, under the schema $S$ at loop entry, covers every iteration, so no fixpoint iteration is needed: fields are only ever added, so a later iteration starts with at least $S$. And $\mathcal{A}(f,S)$ is still available after the loop because a FAOP loop is a do-while (Section~\ref{sec:primitives}), so the body runs at least once. A loop that may run zero times is written $\ifop(q,\loopop(f,p),\nopsym)$, where the $(\text{if})$ rule correctly withholds those fields.

The compiler decides FNF-2 in one syntax-directed pass per body, in $O(\|F\|\cdot|\mathcal{V}|)$:
\begin{lstlisting}
FNF2Check(F):
- S = S_in(F)
- walk b(F) in order, updating S by the
  rules above
- report every field a step or predicate
  reads that is not in S
- report every field of S_out(F) that is
  not in the final S
- accept iff nothing is reported
\end{lstlisting}
Each report names the missing field and the schema available at that point. The Skill Architect (a human or an AI agent) resolves it by reordering steps, adding a step that produces the field, or correcting the field's name.

\begin{proposition}[Type safety]\label{prop:fnf2}
If \textup{FNF2Check} accepts $F$, then on every run of $F$ each step and predicate finds every field it reads already in the state, and $F$ returns every field of $\Sout(F)$.
\end{proposition}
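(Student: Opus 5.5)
The proof is a soundness argument for the abstract interpretation $\mathcal{A}$: the available schema computed statically under-approximates the set of fields actually present in the state at run time. I would prove, by structural induction on bodies $b$, the invariant
\[
\text{(I)}\quad \sigma\supseteq S \ \text{(as domains)}\ \Longrightarrow\ \text{every run of } b \text{ from } \sigma \text{ that terminates ends in } \sigma' \text{ with } \operatorname{dom}\sigma'\supseteq\mathcal{A}(b,S),
\]
together with the statement that every step and predicate reached during that run is reached with a state containing the schema that FNF2Check assigned to it. Here ``reached'' includes the finite prefixes of non-terminating or suspended runs, so the safety half of the claim does not depend on termination. The proposition then follows by instantiating $b=b(F)$ and $S=\Sin(F)$. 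The caller passes exactly $\Sin(F)$ by Convention~\ref{conv:accumulation}, so the initial domain contains $\Sin(F)$, condition~1 gives the read guarantee, and condition~2 with (I) gives $\Sout(F)\subseteq\operatorname{dom}\sigma'$.

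\textbf{Base cases.} For a leaf $\ell$ reached with $\sigma\supseteq S$, well-typedness gives $\Sin(\ell)\subseteq S\subseteq\operatorname{dom}\sigma$, so its reads succeed. Its result is $\sigma\uplus\sigma_\ell$, where $\operatorname{dom}\sigma_\ell\supseteq\Sout(\ell)$. Overwriting under $\uplus$ never removes a field, so the new domain contains $S\cup\Sout(\ell)$. This relies on the leaf honoring its declared contract, which is the same trust assumption the paper makes for effect labels; I would state it explicitly. A checkpoint and $\nopsym$ leave the domain unchanged. An interrupt is handled as a leaf whose continuation writes its declared outputs. For $\callop(\text{id})$, the callee runs in its own scope and returns only $\Sout\cup$ (written $\Sopt$) to be merged. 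Since the registry is accepted as a whole, the callee itself satisfies FNF-2, so we may assume the proposition for it. The call graph is acyclic (Convention~\ref{conv:call-graph}), so the argument is an outer induction along the call graph, callees first, and there is no circularity. The callee is therefore guaranteed to return $\Sout(\mathcal{R}[\text{id}])$, which matches the (call) rule.

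\textbf{Composite cases.} For $\seqop$, chain the inductive hypothesis through $b_1,\ldots,b_n$; monotonicity carries the invariant forward. For $\ifop(p,f,g)$, the predicate is checked at $S$, so it can be evaluated. Whichever arm runs, the inductive hypothesis gives a domain containing $\mathcal{A}(f,S)$ or $\mathcal{A}(g,S)$, and hence containing their intersection. For $\loopop(f,p)$, I would do an inner induction on the iteration count. Iteration $1$ starts from $\sigma\supseteq S$. Each later iteration starts from the output of the previous one, whose domain contains $\mathcal{A}(f,S)\supseteq S$, because $\mathcal{A}$ is extensive: $\mathcal{A}(b,S)\supseteq S$ by a trivial induction. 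So every iteration, including its reads and its test of $p$ after the body, is covered by the single check of $f$ at $S$, which is exactly what the paper's (loop) justification asserts. By the do-while semantics at least one iteration occurs, so the final domain contains $\mathcal{A}(f,S)$.

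The main obstacle is the loop case combined with the precise operational reading. One subtlety is monotonicity: the checker typed the body of $f$ only at $S$, and later iterations run from larger states. I need that typing at $S$ remains valid for any superset, which holds because every rule uses only $\subseteq$ tests and unions. I would isolate this as a small lemma, $S\subseteq T\Rightarrow\mathcal{A}(b,S)\subseteq\mathcal{A}(b,T)$, and note that well-typedness at $S$ implies well-typedness at $T$. A second subtlety is deciding where $p$ is evaluated, namely after $f$, with at least $\mathcal{A}(f,S)$ available. I would adopt the reading that the checker types $p$ there, so that it is a reached predicate of the walk.
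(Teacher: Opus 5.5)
Your proposal is correct and takes essentially the paper's route. Both are a structural induction on the body showing that the available schema $\mathcal{A}$ under-approximates the fields present at run time, resting on the same facts: a sequence runs every step, a branch contributes only what both arms produce, a do-while loop runs its body at least once, and fields are only ever added. Your outer induction along the acyclic call graph and your inner induction over loop iterations make explicit what the paper's short proof leaves implicit, in particular that the (call) rule is sound only if callees honor their declared $\Sout$, which you secure by assuming the callees are accepted too.
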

\begin{proof}
By induction on $b(F)$: each rule adds to $S$ only fields produced on every path through that construct, since a sequence runs all its steps, a branch contributes only what both arms produce, and a loop runs its body at least once. Fields are only ever added to the state, so a field in the available schema at a point is in the state when control reaches it. Condition~1 then makes every read succeed and condition~2 makes the final state contain $\Sout(F)$.
\end{proof}

This and the later propositions assume that each leaf produces what it declares. The executor enforces this at run time: it checks the inputs of $F$ on entry and the outputs of each leaf against its $\Sout$.
}

\subsection{FNF-3 Factored}
\label{sec:fnf3}
{
One step can recur across a whole library: reading an account and confirming it is active is needed when issuing a statement, closing an account, and handling a dispute. If each workflow carries its own copy, a change must be repeated in each one, the copies drift apart, and nothing records which workflows a change affects; libraries that accept every proposed skill pile up such entries~\citep{li2026dynamicskills}. FNF-3 requires that such a sub-workflow exist once, as a registered FAOP the others call; it also finds FAOPs that do the same job with different steps or wording, so that the architect can merge them.

FNF-3 looks for two kinds of copies, both shown on one example below: exact copies, the same steps written in more than one place, and near copies, registered FAOPs that do the same job with different steps or wording.

\paragraph{Exact copies.} Each letter below is one step, registered once under its own name, and \texttt{W1} to \texttt{W4} are composite FAOPs:
\begin{lstlisting}[basicstyle=\ttfamily\small,mathescape]
W1 = seq(T, I, B)
W2 = seq(A, call(W1), M)
W3 = seq(A, T, I, B, P)
W4 = seq(A$'$, T$'$, D, I, B)
\end{lstlisting}
A \emph{sub-workflow} $s$ is one step of a body, or several consecutive steps of one sequence. A sequence written directly inside another is treated as part of it, so \texttt{seq(A, seq(T, I), B)} has the same consecutive steps as \texttt{seq(A, T, I, B)}; a call, however, stays one step. Two steps are identical when they do the same thing: the same kind, effect, tool, action (its query or prompt), input and output fields, condition, loop settings, and called FAOP, and identical inner steps. A step's own name and description are labels for people and are not compared; its fields are compared by their names in the shared vocabulary. Two sub-workflows are identical, $s\equiv s'$, when their steps are identical one by one. Identical sub-workflows at different places, other than single calls or checkpoints, are \emph{exact copies}, and a sub-workflow with an exact copy is a \emph{repeat}.

In the example, \texttt{T I B} in \texttt{W3} is an exact copy of the body of \texttt{W1}. Suppose, for example, that \texttt{T}$'$ has a different label from \texttt{T} but does the same thing; then their bodies are exact copies. Conversely, a step with the same label as \texttt{B} but a different output field is not a copy of \texttt{B}. Labels cannot decide copies; what a step does can, and FNF-2 makes its fields reliable, since they are checked and drawn from the vocabulary. Exact copies are exact relative to this vocabulary: if it gives one concept two field names, the exact search misses that copy, though the semantic search, which reads the whole definition, can still find it as a near copy. \texttt{A} occurs twice, but a single call already refers to one shared FAOP.

\paragraph{Near copies.} Two FAOPs can also do the same job with different steps or wording. The compiler embeds the full definition of each registered FAOP, that is, its name, description, inputs, outputs, and body, and compares two FAOPs $F$ and $F'$ by a similarity $\operatorname{sim}(F,F')$ of their embeddings; any measure will do, and we use cosine similarity. $F$ and $F'$ are \emph{near copies} when $\operatorname{sim}(F,F')\ge\theta$, for a threshold $\theta$ that the Skill Architect sets. Suppose, for example, that \texttt{A}$'$ does the job of \texttt{A} with the same input and output fields but through a different tool; then the two are not identical, and if their similarity reaches $\theta$, they are near copies.

Factoring gives a sub-workflow one definition, owner, version, test boundary, and list of callers, but also makes it a shared contract that no single workflow may change alone. So not every copy should be removed. Every copy is a sub-workflow: an exact copy is part of a body, and a near copy $F$ takes part through its whole body $b(F)$. For two copies $s$ and $s'$, $\operatorname{distinct}(s,s')$ records the architect's decision to keep both, because they do different jobs or should evolve separately. It is a record of decisions, not something the compiler computes: it starts empty, grows only when the architect keeps a pair, and is symmetric, $\operatorname{distinct}(s,s')=\operatorname{distinct}(s',s)$. A pair that is factored instead needs no record, since only one copy remains. The \emph{candidate pairs} of $\mathcal{R}$ are its pairs of exact copies and the bodies of its pairs of near copies:
\[
\begin{aligned}
\operatorname{Cand}(\mathcal{R})=\{\{s,s'\}:{}&s\text{ and }s'\text{ are exact copies,}\\
&\text{or bodies of near copies}\}.
\end{aligned}
\]
\begin{definition}[FNF-3]\label{def:fnf3}
A registry $\mathcal{R}$ satisfies \emph{FNF-3} iff every candidate pair is distinct:
\[
\forall\{s,s'\}\in\operatorname{Cand}(\mathcal{R}):\ \operatorname{distinct}(s,s').
\]
\end{definition}
Factoring removes a pair, so in FNF-3 each shared sub-workflow exists once, as the body of a FAOP that its users call. Appendix~\ref{app:fnf3} shows one example.

\paragraph{Finding candidates.} The compiler finds the candidate pairs with two searches, one for each kind of copy: an exact search over the steps of all bodies, and a semantic search over whole registered FAOPs. Both costs are given in terms of the \emph{size} $\|\mathcal{R}\|$ of $\mathcal{R}$, the number of steps in all its bodies, where a branch or a loop counts as one step and so do the steps inside it; the number of FAOPs, $|\mathcal{R}|$, is at most $\|\mathcal{R}\|$, since every FAOP has at least one step.

The \emph{exact search} works in three steps. First, each step gets a number from what it does, never from its label: a hash table maps its kind, effect, tool, action, fields, and, for a branch or a loop, the numbers of its inner steps to a number, so identical steps get the same number. Second, every sequence, including the arms of a branch and the body of a loop, becomes a string of these numbers, one per step, ended by a separator of its own; a call is a single number, since calls are never expanded. Third, a suffix tree is built over all these strings. Each inner node of the tree stands for a run of numbers that occurs more than once, the leaves below it give all its occurrences, and every repeat is the start of such a run, so the tree holds every repeat together with its occurrences~\citep{gusfield1997}; clone detectors use the same structure to find copied code~\citep{koschke2006}. The compiler then shows the \emph{maximal repeats}, those that lose an occurrence if extended by one step on either side, longest first, each with its occurrences. In the example, these are \texttt{T I B}, in \texttt{W1} and \texttt{W3}; \texttt{I B}, which also occurs in \texttt{W4}; and the single step that forms the bodies of \texttt{T} and \texttt{T}$'$. \texttt{I B} needs its own decision because it also occurs outside \texttt{T I B}: factoring both, the longer first, gives a FAOP \texttt{W5 = seq(I, B)}, and the body of \texttt{W1} becomes \texttt{seq(T, call(W5))}. A shorter part that occurs only inside a longer repeat needs its own decision only if copies of the longer repeat are declared distinct. One run of the exact search finds all exact copies in the registry at once, in $O(\|\mathcal{R}\|)$ expected time: numbering takes $O(\|\mathcal{R}\|)$ expected time, the strings have $O(\|\mathcal{R}\|)$ entries in total, and the suffix tree is built and read in time linear in their length; see Gusfield~\citep{gusfield1997} for the construction.

The \emph{semantic search} embeds the full definition of each registered FAOP once, compares the embeddings of every pair, and reports the pairs whose similarity reaches $\theta$. This takes $|\mathcal{R}|$ embeddings and $|\mathcal{R}|(|\mathcal{R}|-1)/2$ comparisons; in a large library, an approximate nearest-neighbor index finds most close pairs much faster~\citep{malkov2020hnsw}. FNF-2 helps here too, since FAOPs that read and write the same fields get a higher similarity. The search may also give some similarity to FAOPs that only share a sub-workflow, such as \texttt{W1} and \texttt{W3}, but less, and without showing the shared steps; that is the job of the exact search. We embed only registered FAOPs: embedding every sub-workflow inside the bodies as well would need $O(\|\mathcal{R}\|^2)$ embeddings and $O(\|\mathcal{R}\|^4)$ comparisons, or about $O(\|\mathcal{R}\|^2\log\|\mathcal{R}\|)$ with an index.

The results of the two searches are combined, so a copy found by both, such as two FAOPs with identical bodies, appears once.

\paragraph{Checking FNF-3.}
\begin{lstlisting}[basicstyle=\ttfamily\small]
FNF3Check(R):
- for each pair {s, s'} of exact copies,
  or of bodies of near copies:
    report it unless distinct(s, s')
- accept iff nothing is reported
\end{lstlisting}
The check runs both searches and reports every candidate pair that has no $\operatorname{distinct}$ record; it accepts the registry if it reports nothing, which is exactly the FNF-3 condition. For this, neither search may miss a pair: the exact search finds every exact copy by construction, and the semantic search compares every pair of FAOPs directly instead of using the approximate index, which can miss a pair.

\paragraph{Deciding and applying.} For each pair $\{s,s'\}$, the architect either \emph{factors} it or declares $\operatorname{distinct}(s,s')$, and the decision is recorded. Factoring keeps one definition, that of $s$, and turns $s'$ into a call to it, in two moves:
{\begin{itemize}
\item If $s$ is already the whole body of a registered FAOP $C$, it stays as it is. Otherwise $s$ sits inside a larger body; it is registered as a new FAOP $C$ with body $s$, and replaced there by a call to $C$.
\item If $s'$ sits inside a larger body, it is replaced there by a call to $C$. If $s'$ is instead the whole body of a registered FAOP $C'$, every call to $C'$ becomes a call to $C$, and $C'$ is removed.
\end{itemize}
}
In the example, $s$ is the body of \texttt{W1}, which stays, and $s'$ is the copy of \texttt{T I B} inside \texttt{W3}, which becomes \texttt{call(W1)}. Factoring requires equal interfaces,
\[
\Sin(s)=\Sin(s'),\ \Sout(s)=\Sout(s'),\ \Sopt(s)=\Sopt(s'),
\]
that is, the same fields read from the caller, written on every path, and written on only some paths. A FAOP declares them; for an inline copy, they are computed from its steps: its inputs are the fields its steps read before any of its own steps writes them, its outputs are the fields it writes on every path (the FNF-2 rules give these), and its optional outputs are the other fields it writes. Exact copies meet this by construction, and with equal interfaces every caller stays well-typed. Exact copies consist of identical steps, so factoring them changes nothing that a workflow does; the architect decides only whether they should share one definition. Near copies differ, so the architect reads both in full and factors them only if they do the same job.

\paragraph{Rounds and cost.} The compiler works in rounds. In each round, it runs both searches and hands the architect, a human or an AI agent, all undecided candidate pairs at once; the architect decides every pair; and the compiler applies the decisions. Applying them changes bodies, and the changed bodies can contain copies that did not exist before, so the next round searches again. The compiler stops at the first round whose search finds no undecided pair. In the example, round~1 replaces the copy of \texttt{T I B} in \texttt{W3} with \texttt{call(W1)}, registers \texttt{W5 = seq(I, B)} for the copies of \texttt{I B} in \texttt{W1} and \texttt{W4}, makes the callers of \texttt{T}$'$ call \texttt{T}, and, if the architect finds that \texttt{A}$'$ does the job of \texttt{A}, makes its callers call \texttt{A}. This creates a new repeat: \texttt{A call(W1)} now occurs in both \texttt{W2} and \texttt{W3}. Round~2 factors it into a new FAOP \texttt{W6}, unless the architect declares the pair distinct. Round~3 finds no new copy, and the compiler stops with:
\begin{lstlisting}[basicstyle=\ttfamily\small]
W1 = seq(T, call(W5))
W2 = seq(call(W6), M)
W3 = seq(call(W6), P)
W4 = seq(A, T, D, call(W5))
W5 = seq(I, B)
W6 = seq(A, call(W1))
\end{lstlisting}
Here \texttt{T}$'$ and \texttt{A}$'$ are gone, no two sub-workflows other than single calls are identical, and each workflow runs the same steps as before, except that \texttt{W4} now uses \texttt{A} instead of \texttt{A}$'$.

The rounds end. To see why, count two things over all bodies: the \emph{neighboring pairs}, that is, the places where one step directly follows another in a sequence, so that \texttt{seq(T, I, B)} has two, \texttt{T} before \texttt{I} and \texttt{I} before \texttt{B}; and the steps that are not calls, such as an inline branch or loop or the action in the body of a step's own FAOP. Both counts start at most $\|\mathcal{R}\|$ and never grow. Replacing an inline copy of several steps with one call removes the neighboring pairs inside that copy, and replacing a single inline branch or loop with a call turns a non-call step into a call; either way one count falls, so this happens at most $2\|\mathcal{R}\|$ times. Factoring a pair of registered FAOPs removes one of them; there are $|\mathcal{R}|$ FAOPs at the start, and each factoring adds at most one, so this too happens a limited number of times. Finally, a round that factors nothing leaves every reported pair declared distinct and the registry unchanged, so the compiler stops. Hence there are $O(\|\mathcal{R}\|)$ rounds.

Each factoring adds at most one step, so the registry stays of size $O(\|\mathcal{R}\|)$. Over all rounds, the exact searches take $O(\|\mathcal{R}\|^2)$ expected time, and each search can start from the results of the previous one rather than from scratch~\citep{gode2009incremental}; the semantic search compares at most $|\mathcal{R}|(|\mathcal{R}|-1)/2$ pairs per round, fewer with the index. One run of \textup{FNF3Check} finds all candidate pairs in $O(\|\mathcal{R}\|+|\mathcal{R}|^2)$ expected time, $O(\|\mathcal{R}\|)$ for the exact search and $|\mathcal{R}|(|\mathcal{R}|-1)/2$ comparisons for the semantic search, plus $|\mathcal{R}|$ embeddings and the time to list the pairs it reports. Faster searches that find more copies are future work (Section~\ref{sec:future-work}).

Under FNF-3, each shared sub-workflow has one definition, so a change propagates instead of drifting, and the registry names every workflow the change affects. Because every shared sub-workflow is now a call, a FAOP that calls itself through a chain of calls appears as a cycle in the call graph, which the compiler rejects (Convention~\ref{conv:call-graph}); this removes the recursion anomaly.
}

\subsection{FNF-4 Scoped}
\label{sec:fnf4}
{
FNF-4 is the normal form in which every FAOP in a registry has a valid scope. Validity depends on both the FAOP's own actions and its place in the composition: a caller must account for resources required by the FAOPs it calls, while a called FAOP is subject to guardrails declared by itself and by the FAOPs that call it. A scope therefore has two components, resources and guardrails, and each component can contain local entries and entries imposed by composition:
\[
\Sigma(F)=\langle\,\mathcal{T}(F),\ \mathcal{G}(F)\,\rangle,
\]
where $\mathcal{T}(F)$ contains resource entries and $\mathcal{G}(F)$ contains the guardrails that $F$ itself declares. Local resource entries come from action leaves in $F$; composition-imposed resource entries come from FAOPs that $F$ calls. A \emph{call to a FAOP} is a place in a workflow that invokes its registered definition; the same FAOP may be called from several places. The \emph{route} $\pi$ of a call to $F$ is the chain of calls from a root workflow down to it; the guardrail entries at that call, $\mathcal{G}^*(\pi)$, are the local entries of $\mathcal{G}(F)$ together with the composition-imposed entries declared by the FAOPs that call $F$ along $\pi$. Every entry retains the identifier of the FAOP that declared it.

\paragraph{FNF-4-T: resource requirements.} First collect the resources named by action leaves directly in $F$'s body, tagging each entry with $F$'s identifier. These are the resources required by $F$'s own actions. Because $F$ may call other FAOPs, also include the resource entries required by each FAOP that $F$ calls, preserving the identifier attached to every entry. The resulting set must equal $\mathcal{T}(F)$; the checker rejects missing or extra entries.

\paragraph{FNF-4-G: guardrail propagation.} At each call to $F$, with route $\pi$, the expected entries $\mathcal{G}^*(\pi)$ are the declarations made by $F$ together with those made by the FAOPs that call $F$ along $\pi$. Every entry retains its declaring FAOP's identifier. For example, if $F_2$ and $F_3$ both call $F_1$, the expected entries at the call from $F_2$ are the declarations from $F_2$ and $F_1$; at the call from $F_3$, they are the declarations from $F_3$ and $F_1$. The checker compares the entries recorded for each call with the expected entries for its route, rejecting any missing, extra, or incorrectly attributed entry.

\begin{definition}[FNF-4]
A registry $\mathcal{R}$ is in FNF-4 iff (i) each $\mathcal{T}(F)$ contains exactly the resource entries from actions in $F$ and in the FAOPs it calls, with declaring identifiers retained, and (ii) at each call to $F$, with route $\pi$, $\mathcal{G}^*(\pi)$ contains exactly the declarations from $F$ and the FAOPs that call it along $\pi$, with declaring identifiers retained.
\end{definition}

Authors may supply the entries before compilation. The checker computes the expected resource entries in postorder, so each called FAOP is checked before its caller, then checks guardrails separately for every root-to-call route:
\begin{lstlisting}
FNF4Check(R):
- for each FAOP F in R:
-   required = resources used by F's own actions,
    tagged id(F), plus the required resources
    of every FAOP F calls, ids preserved
-   report F unless T(F) equals required
- for each call to a FAOP F in R:
-   required = guardrails declared by F and by the
    FAOPs that call F on this route, each tagged
    with its declaring id
-   report the call unless its guardrail entries
    equal required
- accept iff nothing is reported
\end{lstlisting}

FNF-4 checks that scopes are declared consistently and records where each entry comes from. It does not grant access or enforce anything by itself: deployment policy approves access, a conforming executor confines each call to its declared resources, and a runtime check, such as a validator, enforces each guardrail.

}

\subsection{FAOP Algebra}
\label{sec:algebra}

In universal algebra, an algebra is a carrier set with operations defined over it, and an equational theory adds laws that equate terms denoting the same value~\citep{burris1981}. For FAOP the carrier is the set of well-typed (FNF-2) bodies, the operations are the primitives of Section~\ref{sec:primitives}, and the laws are five equations (L1--L5) a compiler applies as directed rewrites. We fix each in turn, then prove the laws sound against a denotational semantics.

\paragraph{Signature.} Every body $f$ denotes a state transformer $\sem{f}$, a map from states to states (Definition~\ref{def:faop}, Convention~\ref{conv:accumulation}). Sequential composition $\seqop$ is the composition operator and the empty step $\nopsym$ its unit; branch $\ifop$, loop $\loopop$, and call $\callop$ are the control constructors. Parallel $\parop$ is not a primitive but a \emph{derived} operator, equal to $\seqop$ on independent steps (L3).

\paragraph{Semantics.} A state $\sigma$ is a finite map from field names to values, and $\uplus$ merges two states, the right one winning on any shared field (Convention~\ref{conv:accumulation}). Each leaf $\ell$ computes an \emph{output record} $\delta_\ell(\sigma)$: the fields it produces (its declared $\Sout(\ell)$), computed from the fields it reads (its declared $\Sin(\ell)$). Running $\ell$ merges that record into the state; likewise, for any body $x$, $\delta_x(\sigma)$ is the record of fields $x$ writes when run on $\sigma$, so $\sem{x}(\sigma)=\sigma\uplus\delta_x(\sigma)$. The state transformer of each body is:
\begin{flalign*}
\sem{\ell}(\sigma) &= \sigma\uplus\delta_\ell(\sigma) & \\
\sem{\nopsym}(\sigma) &= \sigma & \\
\sem{\seqop(a,b)}(\sigma) &= \sem{b}(\sem{a}(\sigma)) & \\
\sem{\ifop(p,a,b)}(\sigma) &= \sem{a}(\sigma)\ \text{if } p(\sigma),\ \text{else } \sem{b}(\sigma) & \\
\sem{\callop(\text{id})}(\sigma) &= \sem{\operatorname{frame}_{F}(b(F))}(\sigma),\quad F=\mathcal{R}[\text{id}] & \\
\sem{\parop(a,b)}(\sigma) &= \sigma\uplus\delta_a(\sigma)\uplus\delta_b(\sigma)\quad (a\perp b). &
\end{flalign*}
Here $\operatorname{frame}_{F}(b)$ runs a body $b$ the way a call to $F$ runs it, in $F$'s own scope: $b$ sees only the caller's fields that $F$ takes as inputs, only $F$'s outputs are copied back to the caller, and any other field $b$ writes stays local. It is notation for the call boundary, not a new primitive:
\[
\sem{\operatorname{frame}_{F}(b)}(\sigma)
=\sigma\uplus\left(\sem{b}(\sigma|_{\Sin(F)})\right)|_{\Sout(F){}\cup\Sopt(F)}.
\]
An optional output is copied back only if the body holds it at the end. Two bodies are \emph{equal} when they have the same record transformer and observable external behavior.

\paragraph{Independence.}
\begin{definition}[Independence]
\label{def:independence}
For a step $a$, let
\[
\begin{aligned}
\operatorname{Read}(a)&=\Sin(a)\cup\{\text{resources }a\text{ reads}\},\\
\operatorname{Write}(a)&=\Sout(a)\cup\Sopt(a)\cup\{\text{resources }a\text{ writes}\}.
\end{aligned}
\]
For a call, the resources include those read or written by the FAOPs it calls. An access whose mode is unknown counts as a write. Resources are named from the vocabulary, like fields, and two names that may reach the same data count as one: for example, \texttt{core\_banking} and \texttt{ledger\_svc} count as one if both can reach the same account records. Steps $a,b$ are \emph{independent}, $a\perp b$, when no read-after-write, write-after-read, or write-after-write conflict holds:
\[
\begin{aligned}
a\perp b \iff{}& \operatorname{Write}(a)\cap\bigl(\operatorname{Read}(b)\cup\operatorname{Write}(b)\bigr)=\emptyset\\
&\wedge\ \ \operatorname{Write}(b)\cap\operatorname{Read}(a)=\emptyset .
\end{aligned}
\]
Two steps that only read the same resource are independent only if both see the same data, for example a fixed version, or \emph{snapshot}, of it: otherwise a write from outside the workflow between the two reads could make their order matter. Any other undeclared interaction prevents reordering. These conservative conditions extend the classical field test~\citep{bernstein1966} to external effects.
\end{definition}
The independence test compares two steps at a time. To decide whether a step can move past \emph{several} others, we gather all pairwise conflicts into one graph and read the answer off its reachability.
\begin{convention}[Data-dependence graph]
\label{conv:dep-graph}
For a sequence $\seqop(s_1,\ldots,s_n)$, put one node per step and add an edge $s_i\to s_j$ ($i<j$) whenever $s_i\not\perp s_j$, whether the conflict is through fields or external resources. The edge means ``$s_j$ must run after $s_i$.'' The graph uses typed schemas and declared resource accesses, not inferred program behavior. It is the data-dependence part of the classical program dependence graph~\citep{ferrante1987}, which also records control dependences.
\end{convention}
The compiler may reorder two steps when neither reaches the other; an antichain may run in parallel, and every topological order preserves the declared dependencies. Building the graphs of all sequences in a body $F$ takes $O(\|F\|^2\cdot(|\mathcal{V}|+|\mathcal{T}(F)|))$ time: each pair of sibling steps is tested once, and each footprint holds at most $|\mathcal{V}|$ fields and $|\mathcal{T}(F)|$ resources. An earliest-layer parallel schedule then takes time linear in the number of nodes and edges, which is $O(\|F\|^2)$. Soundness requires honest schemas and resource declarations, including aliases. Call frames hide local fields, but do not isolate external services or storage: those interactions must remain in the graph. Parallel execution also requires the declared read-only and snapshot contracts to hold at run time.

\paragraph{Laws.} For all bodies $a,b,c$ and predicate $p$, equality means the same record transformer and observable external behavior:
\begin{description}
\item[(L1) Identity.] $\seqop(\nopsym,f)=f=\seqop(f,\nopsym)$, with $\nopsym$ the empty do-nothing step.
\item[(L2) Associativity.] $\seqop(\seqop(a,b),c)=\seqop(a,\seqop(b,c))$.
\item[(L3) Commutativity.] $a\perp b\implies\seqop(a,b)=\seqop(b,a)=\parop(a,b)$.
\item[(L4) Branch distribution.] $\seqop(\ifop(p,a,b),c)=\ifop(p,\seqop(a,c),\seqop(b,c))$.
\item[(L5) Fold/unfold.] $\callop(\text{id})=\operatorname{frame}_{F}(b(F))$, where $F=\mathcal{R}[\text{id}]$. Unfolding retains $F$'s input/output boundary and scope.
\end{description}

\paragraph{Soundness.}
\label{sec:soundness}
Three honesty conditions (each just restating a rule the normal forms already enforce) make the model exact. \textbf{(A1) Honest footprint:} a leaf's output record $\delta_\ell(\sigma)$ is fixed by the fields it declares to read plus its own effect outcome, and it never reads an undeclared field: this is what a truthful FNF-2 schema means. \textbf{(A2) Outcomes follow the leaf:} the result of a nondeterministic effect (say, an LLM call) is fixed once as a value (a single \emph{draw} from its distribution) that belongs to the leaf's identity, not to its position in the running order; reordering the leaf does not change its value. With every such outcome fixed, each body $\sem{f}$ is an ordinary deterministic function, so the laws hold for the fixed outcomes and therefore also in distribution. \textbf{(A3) Pure predicates:} a branch test reads only stored fields ($\Comp(p)$, Section~\ref{sec:architectural-levels}); a decision that needs the model is first computed by an atomic leaf (FNF-1) and stored, then read purely.
\begin{lemma}[Framing]\label{lem:framing}
If $\operatorname{Write}(a)\cap\operatorname{Read}(b)=\emptyset$, then running $a$ before $b$ changes neither the fields nor the external resources $b$ reads. With the leaf outcomes fixed as in (A2), $\delta_b(\sem{a}(\sigma))=\delta_b(\sigma)$: the declared inputs and external observations on which $b$ depends are unchanged.
\end{lemma}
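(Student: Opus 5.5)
The plan is to split the read set of $b$ into its two parts, fields and external resources, show that $a$ touches neither, and then use (A1) and (A2) to conclude that $\delta_b$ is unchanged. By Definition~\ref{def:independence}, $\operatorname{Read}(b)=\Sin(b)\cup\{\text{resources }b\text{ reads}\}$ and $\operatorname{Write}(a)=\Sout(a)\cup\Sopt(a)\cup\{\text{resources }a\text{ writes}\}$. The hypothesis $\operatorname{Write}(a)\cap\operatorname{Read}(b)=\emptyset$ therefore splits into a field part and a resource part. The proof handles the two parts separately and then assembles the statement about $\delta_b$.

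\emph{Field part.} First I show $\sem{a}(\sigma)|_{\Sin(b)}=\sigma|_{\Sin(b)}$. By the semantics, $\sem{a}(\sigma)=\sigma\uplus\delta_a(\sigma)$, and the domain of $\delta_a(\sigma)$ lies in $\Sout(a)\cup\Sopt(a)$. This holds by a short structural induction on $a$: leaves write only their declared outputs, $\seqop$ and $\ifop$ write the union of what their parts write, and $\callop$ and $\operatorname{frame}_F$ copy back only $\Sout(F)\cup\Sopt(F)$. For a composite body, the declared $\Sout$ and $\Sopt$ cover every field written on some path, as in the interface computation of FNF-3. Since $\uplus$ changes only fields in the domain of the right operand, and that domain is disjoint from $\Sin(b)$, the restriction to $\Sin(b)$ is unchanged.

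\emph{Resource part.} The resources $b$ reads are disjoint from those $a$ writes. Aliasing names are identified, calls include their callees' resources, and unknown-mode accesses count as writes. So no declared effect of $a$ modifies data that $b$ observes. Reads by $a$ do not change anything. When $a$ and $b$ both read a shared resource, the snapshot clause of Definition~\ref{def:independence} guarantees that both see the same version, so interposing $a$ cannot change what $b$ observes. I expect this step to be the main obstacle. It is not a consequence of the formal semantics, which models only records. It has to be argued from the declared-footprint contract and the conformance of the executor, i.e.\ from the honesty assumptions rather than from a calculation.

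\emph{Conclusion.} By (A1), $\delta_b(\tau)$ is a function of $\tau|_{\Sin(b)}$ together with $b$'s effect outcome. By (A2), that outcome is a fixed draw attached to $b$'s identity, not to its position, given the same external observations. Applying this with $\tau=\sem{a}(\sigma)$ and with $\tau=\sigma$, both arguments coincide by the two parts above. Hence $\delta_b(\sem{a}(\sigma))=\delta_b(\sigma)$. For composite $b$ I would extend this by induction on $b$. Predicates are handled by (A3): they read only stored fields in $\Sin(b)$, so each branch takes the same arm. Each sub-step of $b$ reads within $\operatorname{Read}(b)$, so the induction hypothesis applies at every point.
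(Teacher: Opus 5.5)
Your proposal is correct and follows the paper's route. The paper gives no separate proof; the lemma's wording carries your argument: the hypothesis splits into a field part and a resource part, so $a$ alters nothing $b$ reads, and (A1) with (A2) then fixes $\delta_b$. Two small refinements: your induction for composite $b$ should be phrased as a locality invariant ($\delta_x$ depends only on the state restricted to $\Sin(x)$, given fixed outcomes), since sub-steps and predicates of $b$ may read fields produced inside $b$ that lie outside $\operatorname{Read}(b)$; and the snapshot clause belongs to full independence, not to this lemma's hypothesis, so take position-independence of $b$'s external observations from (A2) and the truthful-contract assumption instead.
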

\begin{theorem}[Soundness]\label{thm:soundness}
Under (A1), (A2), (A3), and truthful resource-access contracts, each of L1 to L5, applied to a body in FNF-2, preserves the record transformer and observable external behavior, and gives a body in FNF-2.
\end{theorem}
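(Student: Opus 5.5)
The proof goes law by law. For each of L1--L5 we establish two things: equality of the denotations $\sem{\cdot}$, together with the multiset and order of external effects, and then preservation of FNF-2. Throughout, (A2) fixes every nondeterministic outcome as a per-leaf draw. Each body is then a deterministic state transformer, and "observable external behavior" reduces to the sequence of effects, up to reordering of independent ones.

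L1 and L2 follow directly from the semantic clauses. $\sem{\nopsym}$ is the identity, and $\sem{\seqop(a,b)}=\sem{b}\circ\sem{a}$, so both laws reduce to the unit and associativity laws of function composition. The effect traces are literally concatenations. For FNF-2, the (seq) rule for $\mathcal{A}$ is also a composition, and $\mathcal{A}(\nopsym,S)=S$. Hence the available schema at every step, and at the end, is unchanged, and both conditions of Definition~\ref{def:fnf2} carry over.

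L4 is a case split on $p(\sigma)$. By (A3), $p$ reads only stored fields, so it is evaluated once on the same $\sigma$ on both sides. On each side the chosen arm runs followed by $c$, so denotations and traces agree.

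For typing under L4, on the left $c$ is checked under $\mathcal{A}(a,S)\cap\mathcal{A}(b,S)$. On the right, $c$ is checked under $\mathcal{A}(a,S)$ in one arm and $\mathcal{A}(b,S)$ in the other, and each of these contains the intersection. Left-to-right therefore preserves condition~1, using monotonicity of $\mathcal{A}$ in $S$, which is immediate from the rules by induction.

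For the final schema under L4, we need
\[
\mathcal{A}(c,\mathcal{A}(a,S)\cap\mathcal{A}(b,S))\subseteq\mathcal{A}(c,\mathcal{A}(a,S))\cap\mathcal{A}(c,\mathcal{A}(b,S)).
\]
This also follows from monotonicity, so condition~2 holds. For the right-to-left direction, I would note that the laws are used as oriented rewrites, or else prove $\mathcal{A}(c,X)=X\cup\operatorname{out}(c)$. The latter is true because the $\mathcal{A}$ rules never inspect $S$ except to union it, so it distributes over $\cap$ and the typing checks coincide.

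L5 holds by definition. The (call) clause of $\sem{\cdot}$ is exactly $\sem{\operatorname{frame}_F(b(F))}$. The call graph is acyclic (Convention~\ref{conv:call-graph}) and $\mathcal{R}$ resolves the identifier uniquely, so the unfolding is well-defined. For typing, the frame exposes exactly $\Sout(F)$ (plus optional outputs, which FNF-2 ignores), matching the (call) rule. Its inner body is well-typed from $\Sin(F)$ because $F$ itself is in FNF-2.

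L3 is the main obstacle. Write $\sigma_1=\sem{a}(\sigma)$. We have
\[
\sem{\seqop(a,b)}(\sigma)=\sigma_1\uplus\delta_b(\sigma_1)=\sigma\uplus\delta_a(\sigma)\uplus\delta_b(\sigma),
\]
where the second equality is Lemma~\ref{lem:framing}, applicable because $\operatorname{Write}(a)\cap\operatorname{Read}(b)=\emptyset$. Symmetrically,
\[
\sem{\seqop(b,a)}(\sigma)=\sigma\uplus\delta_b(\sigma)\uplus\delta_a(\sigma).
\]
The two merges agree because $\operatorname{Write}(a)\cap\operatorname{Write}(b)=\emptyset$, so the domains of $\delta_a(\sigma)$ and $\delta_b(\sigma)$ are disjoint and $\uplus$ commutes on them. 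Both expressions also equal $\sem{\parop(a,b)}(\sigma)$.

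The delicate part of L3 is external behavior. Using the truthful resource contracts and the snapshot condition of Definition~\ref{def:independence}, I would argue that no resource is written by one step and read or written by the other. The effects of $a$ and $b$ therefore touch disjoint data, so their interleaving is unobservable, and each outcome is fixed by (A2) regardless of position.

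For typing under L3, independence gives $\Sout(a)\cap\Sin(b)=\emptyset$. So $b$ reads nothing $a$ produces, and $b$ is well-typed under $S$ alone; the same holds for $a$. Both orders end with $S\cup\Sout(a)\cup\Sout(b)$, so FNF-2 is preserved.

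For compound $a,b$, I expect to need Lemma~\ref{lem:framing} in its stated body form, with $\delta$ defined for arbitrary bodies. I would justify it by induction on body structure, using (A1) to ensure no undeclared reads.
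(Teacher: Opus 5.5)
Your proposal is correct and takes essentially the same law-by-law route as the paper: unfold the semantics, use Lemma~\ref{lem:framing} and disjoint writes for L3, state-purity (A3) for L4, and the definitional call clause for L5, then check that each rewrite preserves the available-schema computation. Your FNF-2 argument is somewhat more careful than the paper's: monotonicity of $\mathcal{A}$ and the identity $\mathcal{A}(c,X)=X\cup\operatorname{out}(c)$ also cover the right-to-left (shared-step factoring) direction of L4, which the paper's proof treats only left to right.
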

\begin{proof}
Each law is checked by unfolding the semantics on an arbitrary state $\sigma$, using two facts from the definitions: a sequence composes its steps, $\sem{\seqop(a,b)}=\sem{b}\circ\sem{a}$, and running any body merges its output record into the state, $\sem{x}(\sigma)=\sigma\uplus\delta_x(\sigma)$.

\emph{L1 (identity).} $\sem{\seqop(\nopsym,f)}(\sigma)=\sem{f}(\sem{\nopsym}(\sigma))=\sem{f}(\sigma)$ since $\sem{\nopsym}$ leaves the state unchanged, and symmetrically for $\seqop(f,\nopsym)$.

\emph{L2 (associativity).} The two groupings $\seqop(\seqop(a,b),c)$ and $\seqop(a,\seqop(b,c))$ denote the same transformer: since $\sem{\seqop}$ is function composition and composition is associative, each evaluates on every state $\sigma$ to $\sem{c}(\sem{b}(\sem{a}(\sigma)))$.

\emph{L5 (fold/unfold).} Both sides run the same registered body on $\sigma|_{\Sin(F)}$ and merge only its declared outputs $\Sout(F)\cup\Sopt(F)$ into the caller. The frame and resource/guardrail context are retained, so local fields do not leak. Combined with L2 and a checked extraction interface, this gives extract/inline.

\emph{L3 (commutativity).} Let $a\perp b$. Unfolding the sequence step by step,
\[
\begin{aligned}
\sem{\seqop(a,b)}(\sigma)&=\sem{a}(\sigma)\uplus\delta_b(\sem{a}(\sigma))\\
&=\sigma\uplus\delta_a(\sigma)\uplus\delta_b(\sem{a}(\sigma)).
\end{aligned}
\]
By Lemma~\ref{lem:framing} $b$ reads none of $a$'s writes, so $\delta_b(\sem{a}(\sigma))=\delta_b(\sigma)$; symmetrically $\delta_a(\sem{b}(\sigma))=\delta_a(\sigma)$. Hence
\[
\begin{aligned}
\sem{\seqop(a,b)}(\sigma)&=\sigma\uplus\delta_a(\sigma)\uplus\delta_b(\sigma),\\
\sem{\seqop(b,a)}(\sigma)&=\sigma\uplus\delta_b(\sigma)\uplus\delta_a(\sigma).
\end{aligned}
\]
Independence also gives $\Sout(a)\cap\Sout(b)=\emptyset$, so the two records write disjoint fields and $\uplus$ merges them to the same state in either order, which is exactly $\sem{\parop(a,b)}(\sigma)$. The external-resource conditions exclude conflicting effects as well, so the swap preserves their externally visible result.

\emph{L4 (branch distribution).} The predicate $p$ is state-pure (A3), so it takes the same value on both sides at $\sigma$: when $p(\sigma)$ holds both reduce to $\sem{c}(\sem{a}(\sigma))$, otherwise both reduce to $\sem{c}(\sem{b}(\sigma))$.

\emph{FNF-2 preservation.} L1 and L2 change neither leaf schemas nor execution order. L5 preserves the call frame and the same declared inputs and outputs, so read availability is unchanged. L4 puts $c$ in each branch arm, where it sees at least the post-join schema. L3 swaps independent steps: neither reads the other's writes, and their writes are disjoint, so both remain well-typed and yield the same state. A parallel schedule of an antichain follows by repeated such swaps.
\end{proof}

\paragraph{Applications.} As directed rewrites the laws are the compiler's optimizations (Table~\ref{tab:algebra-laws}), and they ground FNF-3: L2 groups a contiguous sub-workflow and L5 names it $\callop(\text{id})$ (extract/inline), while L4 factors a shared post-branch step through a branch. The laws do not choose what to factor; candidates come from the FNF-3 search. They only supply the meaning-preserving rewrite that performs the factoring once a candidate is accepted. FNF-3 is thus normalization \emph{driven by} the laws: it decides \emph{when} to apply these moves and fixes the one canonical identity. For exact copies the moves preserve behavior (Theorem~\ref{thm:soundness}); near copies differ, so they are factored only after the architect judges that they do the same job. L3 is the compiler's parallelization rule: independent steps, read off the dependence graph, run concurrently to cut wall-clock latency. This and further schema-driven rewrites (interrupt fusion) are listed with +O. Appendix~\ref{app:laws} applies each law to the running example.

\begin{table*}[t]
\centering
\caption{Algebraic laws and compiler optimizations.}
\label{tab:algebra-laws}
\small
\begin{tabularx}{\linewidth}{@{}l l X@{}}
\toprule
Law & Optimization & Effect \\
\midrule
L1 & Empty-step elimination & Insert or remove only an effect-free do-nothing step; logging, metrics, and checkpoints are not identities. \\
L2 & Sub-workflow extraction & Regroup a sub-workflow and register it as a FAOP under FNF-3. \\
L3 & Auto-parallelization & Schedule independent steps with $\parop$, reducing wall-clock time. \\
L4 & Shared-step factoring & Read right to left, write a step that ends both arms once, after the branch (FNF-3). \\
L5 & Canonical reuse & Replace a framed inline body by its registered call, preserving declared inputs, outputs, and scope (FNF-3). \\
\bottomrule
\end{tabularx}
\end{table*}

\subsection{Extensions}
\label{sec:extensions}

A FAOP that satisfies all four normal forms is structurally well-formed: its leaves are explicit (FNF-1), its data flows are sound (FNF-2), its shared sub-workflows are canonical (FNF-3), and its resource declarations match their recursively expanded bodies while local guardrails are propagated per route (FNF-4). Structural well-formedness is necessary for safe composition, but it is not sufficient for all engineering concerns.

Extensions are denoted by a \texttt{+} prefix and are recorded in the FAOP's signature (Section~\ref{sec:maturity-card}) alongside the normal-form level. The set of extensions presented here is not closed; defining additional extensions for domain-specific concerns is ongoing work. Appendix~\ref{app:extensions} applies all three to the running example: parallel ledger reads (+O), stopping limits with latency and cost bounds (+B), and reachability obligations (+R).

\subsubsection{+O Optimized}

Extension +O records application of a specified set of semantics-preserving rewrites that reduce latency or user round-trips. It does not certify a globally cheapest workflow. Examples follow; extending the rewrite set is ongoing research.

\emph{Parallelization.} For adjacent siblings $a,b$ in a $\seqop$ node, if $a\perp b$ (Definition~\ref{def:independence}), then $\seqop(a,b)$ is rewritten to $\parop(a,b)$ so the two run concurrently, reducing wall-clock latency.

\emph{Interrupt fusion.} If adjacent interrupts $\{\iota(q_1,k_1),\ldots,\iota(q_m,k_m)\}$ are pairwise independent and have disjoint query keys, they may be fused into a compound interrupt $\iota(\{q_1,\ldots,q_m\},k)$. No question or its applicability may depend on another answer, and the combined continuation must preserve the original answer handling. This reduces $m$ user round-trips to one.

\emph{Unused steps} are not rewritten by +O: +R reports them to the architect (Section~\ref{sec:ext-r}).

\emph{Prerequisites:} FNF-2 schemas and declared external-resource accesses. Parallelization and interrupt fusion use the full independence test, not disjoint answer or output keys alone.

\subsubsection{+B Bounded}
\label{sec:ext-b}

A service may never reply, a person may never answer, and a model-driven loop may never reach its intended exit. Extension +B requires finite stopping limits and time/cost bounds for every execution path, so the executor finishes or reports failure within a declared budget. It does not promise successful completion.

\paragraph{Definition (+B).} A FAOP $F$ satisfies +B iff every loop, including loops in callees and handlers, has a finite maximum iteration count $N_{\max}$; every action, interrupt, and checkpoint has an enforceable latency/cost pair $(\tau_{\max},c_{\max})$; and all remaining work, including predicates, continuations, dispatch, cancellation, and failure reporting, has a finite accounted bound. An interrupt has a response deadline, not an assumption that a person eventually responds. Every timeout and exhausted loop must lead to bounded handling or explicit failure, never an implicit unbounded retry.

\paragraph{Composition of bounds.} The compiler works bottom-up over the acyclic call graph. A sequence sums its children's latency and cost; a branch takes the componentwise maximum over its arms; and a parallel group takes the maximum latency and sums costs. Each includes its own bounded overhead. A loop includes at most $N_{\max}$ body invocations and predicate evaluations, plus bounded exhaustion handling. A call includes its callee's bound and frame/dispatch overhead. Interrupt continuations and all timeout/failure handlers are bounded in the same way, including any calls or retries they contain.

\emph{Prerequisites:} typed bodies and an acyclic call graph. +B does not require +R: a loop may be capped without proving that its branches are reachable. Bounds must be enforced by a conforming executor and, where needed, by service-side limits or reservations. A local timeout alone does not bound the cost of a remote operation that keeps running.

\begin{proposition}[Bounded completion or failure]
If $F$ satisfies +B and the declared enforcement contracts hold, every run finishes or reports failure within the computed latency and cost bound.
\end{proposition}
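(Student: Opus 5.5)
The plan is a structural induction that follows how the compiler computes bounds, in the order set by the acyclic call graph (Convention~\ref{conv:call-graph}). Acyclicity gives a well-founded order: topologically sort the registry, treat callees before callers, and within one FAOP induct on the finite body tree. The claim I will prove for each body $x$ is stronger than the proposition: for every entry state $\sigma$, every run of $x$ from $\sigma$ either returns control or raises an explicit failure, and it does so with elapsed time at most $\tau(x)$ and cost at most $c(x)$, where $(\tau(x),c(x))$ is the pair computed by the composition rules of +B. A failure propagates outward as the run's result, so it also stays within the enclosing bound. This stronger claim is needed because the intermediate states are arbitrary, so the hypothesis must hold for all of them.

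\emph{Base cases.} An action, interrupt, or checkpoint is cut off by the executor at $\tau_{\max}$ and $c_{\max}$; here the declared enforcement contracts do the work, including the service-side limits that keep a remote operation from running on. A timeout then passes to a bounded handler or to explicit failure. That handler is a body already covered by the accounted overhead, and +B forbids an implicit retry. A $\nopsym$ or $\opCompute$ leaf has finite accounted cost.

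\emph{Inductive cases.}
\begin{itemize}
\item A sequence runs its children one after another. Each child ends, by success or failure, within its own bound, and a failing child stops the sequence early. Summing the children and adding the sequence's overhead dominates every run.
\item A branch evaluates $p$ within its bounded cost and then runs exactly one arm. The componentwise maximum over the arms covers either choice.
\item A parallel group's elapsed time is the maximum over its members, and its cost is the sum, since all members may spend.
\item A call adds frame and dispatch overhead to the callee's bound, which is available by the outer induction on the call graph.
\item An interrupt's continuation is bounded as a body, and the response deadline caps the wait.
\end{itemize}

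\emph{Main obstacle: the loop.} Because the loop is a do-while, a run performs $m$ body executions and $m$ predicate evaluations for some $m$. The enforced cap $N_{\max}$ forces $m\le N_{\max}$: either $p$ becomes true earlier, or exhaustion handling takes over, and that handling is bounded. By the inductive hypothesis each iteration is bounded from whatever state it starts in. That holds for every state, so state changes across iterations cause no trouble. The total is therefore at most $N_{\max}(\tau(f)+\tau(p))$ plus the exhaustion and overhead terms, and similarly for cost. The delicate point is that the argument needs the cap to be \emph{enforced} by the executor, not merely declared; the enforcement-contract hypothesis supplies this. I also need nested loops inside callees and handlers to be covered, which the +B definition states explicitly.

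Applying the claim to $b(F)$ at the root yields the proposition, since every run either finishes or reports failure within the computed bound.
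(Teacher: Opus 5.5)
Your proposal is correct and follows the paper's argument: induction over the constructs, with the acyclic call graph and the finite loop caps making the induction well-founded, and handlers and continuations bounded by the same rules. You make explicit what the paper's one-line induction leaves implicit, namely the callees-first outer order, the for-every-entry-state hypothesis, and the fact that each construct's run stays within the bound computed by the +B composition rules rather than just some finite bound; the one phrase to tighten is ``covered by the accounted overhead'' for timeout handlers, which may contain calls and capped retries and so belong inside the same induction, exactly as the paper's ``subject to the same rules'' says.
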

\begin{proof}
Every primitive and control overhead has a finite bound. The call graph is acyclic, each loop has finitely many iterations, and every continuation or handler is subject to the same rules. Induction over these constructs gives a finite bound on every success and failure path. This bounds the executor's outcome, not the success of the requested business task.
\end{proof}

\subsubsection{+R Reachable}
\label{sec:ext-r}

A branch may be well-typed yet unreachable because of what earlier steps establish. A step may also run on every path yet compute a result that nothing uses. Extension +R rules out both, in two senses of reaching: control reaches every branch arm, and every computed result reaches a use.

\paragraph{Definition (+R).} A FAOP $F$ satisfies +R when (i) both arms of every branch in its body, and in the FAOPs it calls, are proved reachable from its declared entry domain, and (ii) every \emph{pure} step in these bodies, one built only from value leaves, directly or through the FAOPs it calls, writes at least one field that a later step or predicate may read before another step overwrites it, or that the FAOP returns. A step with an action, interrupt, or checkpoint is never unused, since its effect is observable even if no field it writes is read. The proof of (i) is relative to the declared transition contracts; it says nothing about how long a step takes or whether a service ever replies.

\paragraph{How it is checked.} The compiler checks (ii) itself with liveness analysis, the standard dataflow analysis that runs from the end of a body toward its start and finds, at each point, the fields that a later step may still read~\citep{kildall1973}. Under (A1) every read is declared, so the result is exact for the declared schemas and needs no contracts. A step whose only readers lie in an arm found unreachable is reported as unused too.

For (i), FAOP adds no checker of its own: it uses bounded model checking~\citep{biere1999bmc}, which reduces the question whether a run of at most $k$ steps reaches a given state to propositional satisfiability, so that a satisfying assignment is a witness run. Bounded model checking works on a finite boolean encoding of the state, so each field is encoded in a finite domain. The mapping is direct: each field becomes a state variable; each leaf may return any value that its output contract allows, or any value of its type if it has none; each branch keeps its condition; and each call runs in its frame. Each arm then becomes one question, and a witness proves it reachable. With the loop caps of +B, and since the call graph is acyclic, every run has at most a known number of steps; taking $k$ as that number covers every run, so finding no witness shows that no run consistent with the contracts and the encoding reaches the arm. If the checker gives no answer within its limits, the result is ``not proved'' and +R is not awarded. Testing the condition alone is not enough: suppose, for example, that the step computing \texttt{balance} never returns a negative value; then the arm under \texttt{balance < 0} never runs, although the test itself can be true. Both kinds of finding go to the architect, who removes the arm or the step, or corrects the condition, contract, or schema behind it; the compiler removes nothing on its own.

\paragraph{The extension set is open.} New extensions can be defined for domain-specific guarantees without altering the core or the four normal forms. Two important examples are \emph{transactional atomicity} (+T), where an agent that moves money, sends messages, or writes to external systems needs an all-or-nothing commit so that a partial failure never leaves the outside world half-updated, and \emph{compile-time state contracts} (+C), which check that the values passing between steps satisfy declared pre/post conditions and invariants before deployment. Both matter for real-world AI agents, and we plan to develop them in the near future (Section~\ref{sec:future-work}).

\subsection{AI Architectural Levels}
\label{sec:architectural-levels}

When an SOP is encoded as an AOP, a common shortcut is to hand the whole AOP to a ReAct-style agent that drives every step with the model in the loop. Most of those steps are fixed database and API calls, so the model adds cost, latency, and lost reproducibility to work that needs none of it, and a mostly deterministic procedure inherits the governance reserved for autonomous systems. The real question is which steps need a model, and an unnormalized AOP cannot answer it: the requirement is hidden inside opaque prose, so the choice is all or nothing and governance applies to the whole artifact. Normalization resolves this: FNF-1 exposes every step as a separate, effect-labeled leaf and FNF-2 gives each a typed contract, so the need for a model is decided per leaf and per routing predicate. FAOP thus turns the decision into a classification computed from the normalized body.

The classification rests on two tests, both decided by static inspection of the typed body. A leaf is deterministic, written $\Det(\ell)$, when its effect label is neither $\opLLM$ nor $\opPlan$: the step is a tool, DB, API, file, queue, or value computation ($\opCompute$) rather than a model call or a planning step. A predicate is computable, written $\Comp(p)$, when the branch or loop condition is an ordinary typed boolean over fields already in the state, for example \texttt{balance < 0}, rather than a judgment delegated to the model.

Planning is itself a valid FAOP action: a planning leaf carries the effect label $\opPlan$ from the effect signature (Section~\ref{sec:fnf1}) and produces a FAOP either by selecting and composing existing registered FAOPs, or by generating a new body. A planner is therefore an ordinary FAOP whose body contains a planning leaf.

\begin{definition}[Architectural level]
Writing $\predsop(F)$ for the $\ifop$ and $\loopop$ predicates of $F$, the level $\lambda(F)$ is defined as follows.
\begin{itemize}
\item Level 0 (no AI): $\forall\ell\in\leavesop(F):\Det(\ell)$ and $\forall p\in\predsop(F):\Comp(p)$. No model is invoked anywhere.
\item Level 1 (AI agent): some leaf is non-deterministic ($\opLLM$), but every branch and loop condition is still a computable boolean ($\Comp$); the model may produce a value a condition later tests, but it does not select or generate which FAOPs run.
\item Level 2 (agentic AI, selector planner): $F$ contains a planning leaf that selects and composes only existing registered FAOPs; no unverified logic enters the system.
\item Level 3 (agentic AI, generator planner): $F$ contains a planning leaf that generates a new body; the produced FAOP may execute only after it passes FNF-1 through FNF-4.
\end{itemize}
\end{definition}

Because a planner emits FAOPs that must themselves be normalized, the levels stack: a Level-2 or Level-3 FAOP produces Level-0 or Level-1 bodies. We treat the planning leaf as atomic and assume each emitted body is normalized before it runs; the reflective case, where bodies are normalized at runtime, is left to future work (Section~\ref{sec:future-work}).

The running example is Level~1: one leaf calls a model to summarize account activity, while every other leaf and predicate is deterministic (Appendix~\ref{app:need-ai}).

Proving a lower level is a benefit in itself: every leaf shown deterministic bypasses the model, so a body with only a few $\opLLM$ leaves runs mostly with no token cost, no model latency, and full reproducibility. Recorded in the FAOP's signature (Section~\ref{sec:maturity-card}), the level informs model-risk review by distinguishing a non-model workflow, an LLM-assisted tool, and a runtime planner. It does not determine legal obligations; governance also depends on the use case, data, and organizational policy.

\paragraph{Practical distribution of levels.} We expect the vast majority of production workflows, particularly in regulated domains such as finance, to lie at Levels 0 and 1, where the body is fixed and auditable and execution must follow the specification exactly. Level 2 selectors are plausible but uncommon (at most a single organization-wide planner that assembles workflows from the verified registry), and Level 3 generators, which synthesize new bodies at runtime, are rare to non-existent in settings where no execution error is tolerated.

\subsection{Signature and Maturity Card}
\label{sec:maturity-card}

Every compiled FAOP carries a \emph{signature}: a compact, compiler-computed summary of what that FAOP has been shown to satisfy, written as the triple $(k,E,\lambda)$:
\begin{itemize}
\item $k\in\{0,1,2,3,4\}$ is the highest normal form the FAOP satisfies; $k=3$, for instance, means it meets FNF-1 through FNF-3 but not FNF-4.
\item $E$ is the set of extensions it satisfies, for example $\{+O,+B,+R\}$ (Section~\ref{sec:extensions}).
\item $\lambda\in\{0,1,2,3\}$ is its AI architectural level (Section~\ref{sec:architectural-levels}): whether the FAOP uses no model, a model as a tool, or a runtime planner.
\end{itemize}

Under the stated proof and runtime-enforcement assumptions, the running example has signature $(4,\{+O,+B,+R\},1)$ (Appendix~\ref{app:signature}).

The signature makes per-FAOP governance a check on the triple: an organization writes its deployment policy as a rule over $(k,E,\lambda)$: development might allow any $k\geq 2$, while production might require $k=4$, $+B\in E$, and $\lambda\leq 1$, reserving the strongest model-risk controls for planners ($\lambda\in\{2,3\}$).

The registry aggregates the signatures of all its FAOPs into a single \emph{Maturity Card}: a portfolio view of the whole library (the distribution of normal-form levels $k$, which extensions are satisfied where, and the spread of architectural levels $\lambda$) that turns library-wide compliance into a schema-level, checkable property.

\subsection{Summary of Benefits}

Table~\ref{tab:benefits} collects the consequences of the four normal forms and their extensions for an AOP library. Each row names a benefit, the behavior under FAOP that delivers it, and the responsible FAOP mechanism.

\begin{table*}[t]
\centering
\caption{Benefits delivered by FAOP normalization. Source identifies the FNF level or extension responsible.}
\label{tab:benefits}
\scriptsize
\begin{tabularx}{\linewidth}{@{}r l X l@{}}
\toprule
\# & Benefit & Effect under FAOP & Source \\
\midrule
1 & Integrity across workflows & Affected workflows are notified and reviewed again; the compiler rejects structural and interface incompatibilities. & Registry \\
2 & Scope validation & Resource footprints and per-route guardrail declarations are checked; runtime confinement requires an enforcing executor. & FNF-4 \\
3 & Type safety & Compiler verifies $\Sin(x)\subseteq S$ for every step and predicate $x$ before deployment. & FNF-2 \\
4 & Auto-parallelization & Independence proves disjoint dependencies; siblings are scheduled concurrently. & FNF-2, +O \\
5 & No unused steps & Pure steps whose results are never read or returned are reported to the architect; effectful steps are never unused. & +R \\
6 & Interrupt batching & Independent interrupts with disjoint keys merge into one form. & +O \\
7 & LLM bypass & Effect-labeled non-LLM leaves dispatch directly; only $\opLLM$ leaves invoke the model. & FNF-1 \\
8 & Reuse without duplication & Sub-workflow extracted into one registered FAOP, called everywhere, unless the architect declares the copies distinct. & FNF-3 \\
9 & Bounded completion & Loops, waits, handlers, and execution overhead are bounded; the run finishes or reports failure. & +B \\
10 & Reachability check & Bounded model checking finds a witness run for every branch arm under the declared contracts; arms without one are not certified. & +R \\
11 & Observability & Each leaf carries identity, typed schemas, and an effect label. & FNF-1 \\
12 & Partial retry & A run can resume from its last $\chi$ checkpoint. & Checkpoint \\
13 & Isolated testing & Each leaf is testable in isolation against $\Sin,\Sout$. & FNF-1 \\
14 & Safe refactoring & Algebraic laws guarantee semantic preservation. & Algebra \\
15 & Impact analysis & The registry identifies all direct and transitive dependents for notification and renewed review. & FNF-3 \\
16 & Progressive adoption & Four normal forms admit incremental migration. & FNF-1--4 \\
17 & Architectural-level classification & $\lambda\in\{0,1,2,3\}$ is computed from the typed body and informs governance, not legal obligations. & Section~\ref{sec:architectural-levels} \\
18 & No hidden recursion & The call graph is required acyclic (Convention~\ref{conv:call-graph}); any recursive cycle is rejected at compile time. & FNF-3 \\
\bottomrule
\end{tabularx}
\end{table*}

\section{Toolchain: SkillBase and the Executor}
\label{sec:toolchain}

A theory is only useful if its checks can be enforced on the artifacts an organization ships. The FAOP toolchain does this in two top-level pieces, shown in Figure~\ref{fig:toolchain}: \textbf{SkillBase}, the design-and-store side that checks and keeps FAOPs, and a separate \textbf{executor} that runs them. SkillBase plays the role of PostgreSQL for an AOP library: it supplies the schema, constraints, and queryable metadata that turn a folder of files into a governed asset. In one line, SkillBase is an \emph{AOP management system}. SkillBase has three components; the executor is the fourth part of the toolchain.

\begin{itemize}
\item \textbf{Vocabulary}: the controlled list of canonical field, resource, and guardrail names; a resource names any external dependency a leaf uses, such as a tool, agent, database, API, or queue. Every name in a FAOP must resolve here, so one workflow cannot call a field \texttt{account\_id} while another calls the same field \texttt{acct\_no}.
\item \textbf{Compiler}: turns a source artifact into a checked, normalized FAOP. It parses the body, resolves names against the vocabulary, and runs the FNF-1--FNF-4 passes and the algebra and extension checks of Section~\ref{sec:normal-forms} \emph{against the architect's declarations} (effect labels, scopes, and loop limits), flagging violations rather than inferring intent. It then computes the signature $(k,E,\lambda)$ and emits the optimal, normalized blueprint consumed along either path below.
\item \textbf{Registry}: the index of record over the FAOP source artifacts: versioned FAOP bodies, the call DAG, a reverse-dependency index that answers impact queries when a shared FAOP changes, and the per-FAOP signatures the Maturity Card aggregates. The artifacts may be stored in git or a database, and the current implementation supports both; the stored artifacts are the source of truth and the registry is a queryable index over them. A shared-procedure change triggers notification of every affected workflow and its owners, following direct and transitive call edges. Each affected workflow must be reviewed again before adopting the updated blueprint. The compiler detects structural and interface incompatibilities; unchanged schemas do not establish unchanged business meaning, which still needs review and tests.
\item \textbf{FAOP executor}: a separate runtime that runs a blueprint; it is the component that turns the compiled normal-form and extension guarantees into enforced runtime behavior. It dispatches each leaf by its effect symbol to the concrete tool, API, database, or model the leaf names, admits that call only when the scope $\Sigma$ permits it, handles interrupts and checkpoints, runs each FAOP invocation in its own state scope so that only its declared outputs cross back to the caller, and runs proven-independent steps in parallel. Because it consumes only the blueprint, the executor can be built on an existing workflow engine rather than reimplementing one. The tools, APIs, databases, and models themselves lie outside the toolchain; the executor is the boundary that gates every call against $\Sigma$.
\end{itemize}

\paragraph{A spectrum of executors.} The theory fixes what a FAOP means, not how a runtime executes it, so the executor admits a range of implementations, from a generic agent handed the whole FAOP (trivial to build but enforcing nothing, the ReAct shortcut of Section~\ref{sec:architectural-levels}) to a runtime that interprets the body structurally. Ours does the latter.

\paragraph{Execution model.} The executor runs a FAOP by realizing the denotational semantics of Section~\ref{sec:algebra} directly: it walks the body, threading the state record $\sigma$ through each construct. A $\seqop$ runs its steps in order; an $\ifop$ evaluates its predicate and takes the matching arm; a $\loopop$ repeats its body until the predicate holds or a declared +B cap $N_{\max}$ invokes bounded exhaustion handling; and a $\callop(\text{id})$ opens a fresh frame for $\mathcal{R}[\text{id}]$, passing in only $\Sin$ and merging back only its declared outputs, so the called FAOP runs in its own state scope, exactly as a function call in a conventional language runs in its own frame, seeing only its arguments and returning only its result. The implementation also lets a call rename the fields it passes in and returns, and fix some inputs to constants, so one shared FAOP can serve callers whose fields have different names. Each leaf is dispatched by its effect label to the bound tool, model, database, or in-process function, and admitted only if the scope $\Sigma$ permits it; proven-independent siblings run concurrently, with their disjoint output records merged by $\uplus$. An $\iota$ interrupt suspends the run with its query and resumes the continuation when the answer arrives, and a $\chi$ checkpoint persists the running state so the workflow can resume from that point.

\paragraph{Two execution paths.} Compilation produces a single optimal, self-contained \emph{blueprint}: the full FAOP (typed schemas, effect labels, scope, extensions, the normalized call DAG, every FAOP it transitively calls, and the vocabulary) stored in the registry alongside the discovery metadata $(n,d,\Sin,\Sout)$ and the signature, so a consumer needs nothing else to run it. The same blueprint feeds both paths. A FAOP-aware executor consumes it directly, materializing it into a runnable graph and enforcing every normal-form guarantee and declared extension at runtime. For any other executor (a runtime that is not FAOP-aware), an \emph{adapter} converts the same blueprint into an AOP in that executor's own format, so an existing runtime can run the normalized workflow unchanged. The adapted AOP keeps the design-time benefits (explicit typed steps, canonical composites, and a clean description for discovery) and keeps the runtime guarantees, chiefly scope confinement (FNF-4) and effect-based LLM bypass, to the extent that the target executor can enforce them. The blueprint is thus the single compiled artifact: the FAOP-aware path runs it with full guarantees, while the adapter path may trade some of them for backward compatibility. Appendix~\ref{app:exporting} exports the running example both ways.

\paragraph{FAOP as a language.} FAOP defines a small domain-specific language, playing the role for agent workflows that SQL plays for relational data: a deliberately restricted language that a system can check and optimize. The grammar of Section~\ref{sec:composite} is its syntax; the effect, type, and scope checks of FNF-1, FNF-2, and FNF-4 (with +B and +R) are its static semantics; the algebra of Section~\ref{sec:algebra} is its equational theory; SkillBase's compiler checks and normalizes its programs; and the FAOP executor is its reference runtime.

\paragraph{The Skill Architect.} The compiler does not do everything on its own. The Skill Architect (the DBA of the agentic era, Section~\ref{sec:formal-framework}) supplies the declarations the compiler checks and makes the judgments the rules leave open: authoring or reviewing the source artifact; declaring each leaf's effect label (FNF-1) and scope (FNF-4) and any loop limit (+B), which the compiler verifies but does not infer; deciding the FNF-3 candidate pairs; curating vocabulary terms; and setting the deployment policy over FAOP signatures. The compiler enforces the mechanical rules; the architect makes the decisions the rules cannot. We describe the Skill Architect as a person, but the role is not intrinsically human: the same steps (drafting a source artifact, lifting it into the grammar, and proposing effect labels, scopes, and loop limits for the compiler to check) can be carried out by an AI agent, the generator case of Level~3 (Section~\ref{sec:architectural-levels}) and the direction the auto-authoring skill literature already points to \citep{wang2023voyager,li2024autoflow,zhang2025aflow,zhou2026mementoskills}. This paper keeps the process deterministic and architect-reviewed; automating more of it is future work (Section~\ref{sec:future-work}).

\begin{figure*}[!ht]
\centering
\includegraphics[width=\textwidth]{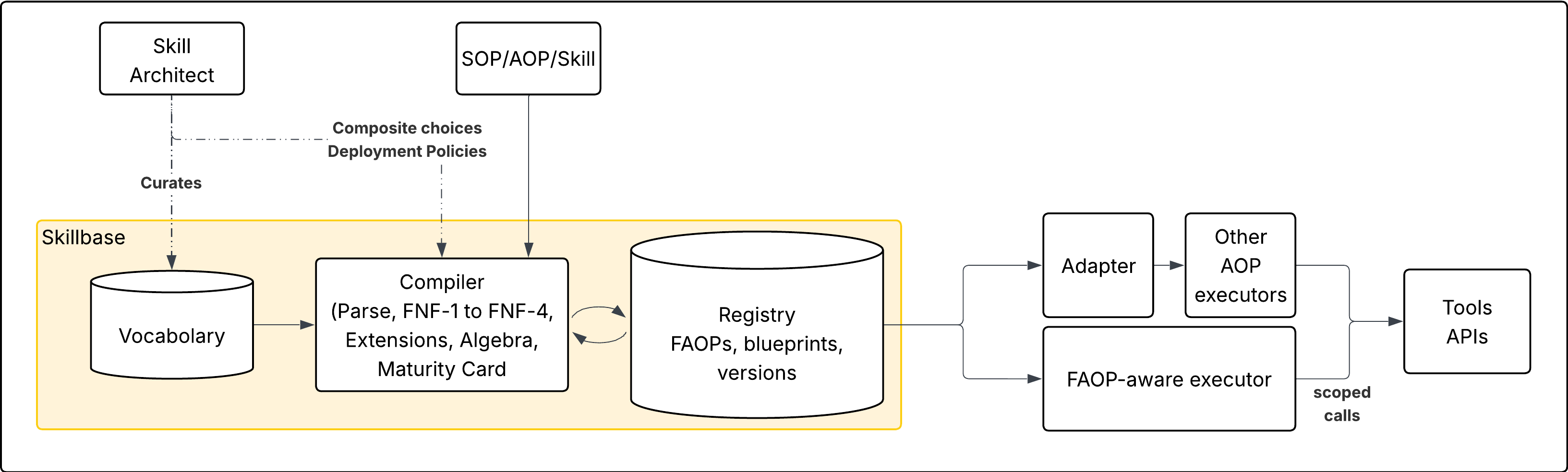}
\caption{The FAOP toolchain. A Skill Architect, human or AI agent, authors the source artifact and makes the design decisions that the compiler then verifies. SkillBase (vocabulary, compiler, registry) checks the artifact and stores each compiled blueprint with its description and signature. The FAOP executor runs the blueprint and enforces scope $\Sigma$ on every call; for other executors, an adapter converts the blueprint into their own AOP format, keeping the structural benefits and the runtime guarantees to the extent that executor can enforce them.}
\label{fig:toolchain}
\end{figure*}

\section{Evaluation}
\label{sec:evaluation}

This paper is intentionally theory-first: it establishes the formalism, the four normal forms, the FAOP algebra, and the toolchain, and defers a full empirical study. The worked normalization of Appendix~\ref{app:normalization-examples} serves as the end-to-end demonstration.

\section{Related Work}
\label{sec:related-work}

\subsection{Skill and Tool Frameworks}
\label{sec:skill-frameworks}

ReAct \citep{yao2023react} established the pattern of LLM agents interleaving reasoning with tool calls, and HuggingGPT \citep{shen2023hugginggpt} extended it by selecting tools from a registry via semantic description matching. More recent systems standardize the skill and tool interface itself: OctoTools \citep{lu2025octotools} formalizes tool cards (name, description, I/O schema, and constraints) with a planner--executor architecture that outperforms AutoGen, GPT-Functions, and LangChain by up to 10.6\% on tool-use benchmarks, while orchestration frameworks such as Semantic Kernel \citep{microsoft2023semantickernel}, LangChain \citep{chase2023langchain}, AutoGen \citep{wu2024autogen}, CrewAI~\citep{crewai2024}, Google A2A~\citep{google2024a2a}, and Anthropic MCP \citep{anthropic2024mcp} converge on a common \texttt{\{name, description, inputSchema, outputSchema\}} contract. Around these frameworks the skill abstraction has grown into a studied infrastructure layer in its own right, surveyed across its lifecycle (representation, acquisition, retrieval, and evolution \citep{zhou2026skillsurvey}) and extended beyond text to multimodal \emph{visual} skills \citep{xu2026visualskills}.

Yet most of this work defines, standardizes, and measures the \emph{atoms} of agent systems, not a theory for reshaping the \emph{library} they form. FAOP supplies that theory: it treats each skill as a typed function within a formal algebra, applies progressive normal forms for incremental guarantees, and provides a compiler that catches composition errors before deployment. It is also modality-agnostic, since it normalizes a workflow's body regardless of what each leaf computes.

Closest in spirit is a lifecycle survey of \emph{evolving} skill libraries~\citep{li2026dynamicskills}. Its evidence points the same way FAOP does: admission matters, maintenance becomes load-bearing as libraries grow, libraries that accept every proposed skill accumulate redundant and conflicting entries, and flat retrieval often degrades at moderate library sizes. That survey asks how libraries change; FAOP supplies the structure that makes a change checkable: typed steps, one definition per shared sub-workflow, declared scope, and a registry that names every affected workflow.

A close neighbor is GraSP \citep{xia2026grasp}, which inserts a compilation layer between skill retrieval and execution: it compiles the skills \emph{retrieved for a task} into a typed DAG with precondition--effect edges, verifies nodes at runtime, and repairs failures through a five-operator local-repair algebra. GraSP and FAOP address different problems: GraSP compiles a single task's retrieved skills for runtime orchestration performance, its algebra patching a failing execution graph in place; FAOP normalizes the \emph{whole library} at design time: progressive normal forms, an equational algebra of \emph{semantics-preserving rewrites}, compile-time scope isolation, and registry-level impact analysis, none of which GraSP provides. Their typing is where they meet: GraSP's precondition--effect edges are the runtime counterpart of FNF-2, so a GraSP-style executor could run a workflow FAOP has already normalized and scoped.

Another close neighbor is Agentics~2.0 \citep{gliozzo2026agentics}, which models each LLM call as a typed function over records, composes such functions under identity, associative composition, and Map-Reduce, and validates their outputs at run time. FAOP shares this typed-function view, but its algebra adds commutativity, branch distribution, and fold/unfold, and it normalizes, scopes, and checks a whole library before deployment rather than one program at run time.

\subsection{Prompt Engineering and Structured Reasoning}
\label{sec:prompt-engineering}

The foundation of modern agent workflows rests on prompt engineering advances. Wei et al. \citep{wei2022cot} showed that decomposing reasoning into intermediate steps dramatically improves LLM performance. Yao et al. \citep{yao2023react} extended this by interleaving reasoning with tool-calling actions, creating the reason-act-observe loop that underlies most agent executors today. Schulhoff et al. \citep{schulhoff2024promptreport} list 58 distinct prompting techniques, and Zhang et al. \citep{zhang2025cotsurvey} trace the evolution from single-prompt reasoning to multi-step agentic systems.

A skill or AOP is, at its core, a prompt: a step-by-step operational prompt that guides an agent through a workflow. Prompt engineering techniques such as chain-of-thought \citep{wei2022cot} and few-shot prompting \citep{brown2020fewshot} improve the quality of each individual step. FAOP still uses prompt engineering at the step level; it does not replace it. What FAOP adds is the theory for managing large skill libraries, the problem that arises when an organization has hundreds of skills sharing common sub-workflows. Specifically, FAOP provides integrity across skills, scope isolation, and compile-time optimizations.

\paragraph{Workflow generation and extraction.} A separate line of work produces source workflows from natural-language inputs. AutoFlow \citep{li2024autoflow} generates agent workflows as natural-language programs from task descriptions, refined by fine-tuning or in-context optimization. AFlow \citep{zhang2025aflow} searches the space of code-represented agentic workflows via Monte-Carlo Tree Search with execution feedback, achieving a 5.7\% average improvement over strong baselines on six benchmarks and enabling smaller models to match or outperform GPT-4o on specific tasks at a fraction of the inference cost. Flow-of-Action \citep{pei2025flowofaction} structures Site-Reliability-Engineering expertise as explicit SOPs, and synthesizes new ones when none exist, then constrains a multi-agent system to follow them, lifting Root-Cause-Analysis accuracy from 35.5\% under ReAct to 64.0\%. These pipelines produce the initial artifact (the natural-language or code-shaped workflow) but stop there: they do not typecheck it, do not deduplicate it against a shared registry, and do not confine its scope. FAOP consumes such an artifact as $\mathcal{R}_0$ (Section~\ref{sec:fnf1}) and then runs the FNF-1--FNF-4 passes over it, so the generator supplies the raw material and FAOP supplies the normalization.

\subsection{Formal Verification and Safety}
\label{sec:formal-verification}

To our knowledge, no prior work directly addresses the problem FAOP solves: normalizing skill libraries with progressive normal forms and a FAOP algebra. Recent systematizations of the skill layer in fact call for exactly this (\emph{verifiable} and \emph{certifiable} skills \citep{badhe2026survey,jiang2026sok}), naming the gap FAOP fills. Several recent works, grouped below, tackle adjacent problems.

\paragraph{Scope and permission enforcement.} Odersky et al.~\citep{odersky2026capabilities} use capability tracking in the type system to enforce tool-access restrictions at compile time. Progent~\citep{shi2025progent} enforces per-step permissions through a runtime policy engine; MiniScope~\citep{zhu2025miniscope} reconstructs least-privilege permission hierarchies; Prompt Flow Integrity~\citep{kim2025pfi} adds agent isolation and guardrails against privilege escalation; and a policy algebra~\citep{tripathi2026policyalgebra} composes profiles, tools, and budgets so authority narrows across delegation. FNF-4 instead checks resource-footprint declarations and per-route guardrail provenance. It does not impose parent-to-child authority monotonicity or replace authorization and runtime enforcement. The capability-based governance called for by skill-security work~\citep{xu2026skillarch} remains a separate enforcement concern to which these checked declarations can contribute.

\paragraph{Workflow verification.} $\lambda_A$ \citep{liu2026lambdaa} defines a typed lambda calculus for agent composition, whose pipeline composition satisfies identity and associativity, with Coq-mechanized type-safety and termination proofs; its audit of 835 real GitHub agent configurations found 94.1\% structurally incomplete, confirming that these defects are endemic rather than rare. Agentproof \citep{xavier2026agentproof} extracts a graph model from four agent frameworks and checks structural properties of that graph, such as whether every exit can be reached from the entry and whether any node is a dead end, together with temporal policies over its paths, and Lean4Agent \citep{wang2026lean4agent} uses Lean 4 dependent types for full workflow verification, reporting about 12\% higher task success for verified workflows. GraphFlow \citep{morris2026graphflow} requires the pre- and postcondition contracts of acyclic, sequential workflow diagrams to be proof-checked before admission to a shared library, a verification layer its authors describe as specified and in active development. FAOP's core contribution is library normalization and checked rewrites; +R maps its reachability questions to bounded model checking, while the structural checks run on the specification before code exists.

\paragraph{Policy and contract compilation.} Declarative Policy Compilation \citep{chen2026policy} compiles one DSL specification into multiple deployment artifacts with compile-time verification: close to FAOP's pipeline, but without normal forms or algebraic optimization. AgentSpec \citep{wang2026agentspec} enforces runtime safety constraints structured as triggers, predicates, and enforcement mechanisms, and Agent Behavioral Contracts \citep{bhardwaj2026contracts} give four-tuple contracts $(P,I,G,R)$ with sufficient conditions for safe composition. FAOP formalizes the same compositionality question as equational laws over workflow bodies and, unlike these, stores the result in a normalized registry whose reverse-dependency index makes change-impact a compile-time query.

\subsection{Type-and-Effect Systems and Algebraic Effects}
\label{sec:type-effect}

FNF-1's ``at most one effect per leaf'' condition (Section~\ref{sec:fnf1}) draws on a long programming-languages tradition. Type-and-effect systems, from Lucassen and Gifford \citep{lucassen1988} to Marino and Millstein's generic account \citep{marino2009}, extend a function's type with a summary of the effects it performs (reads, writes, allocations) over typed regions; Wadler's monadic style \citep{wadler1992} obtains the same pure/impure separation by threading effects through a monad. These systems are \emph{descriptive}: given a program, they report the effects it already has, but they do not dictate how the program should be decomposed into effectful steps.

FAOP inverts that relationship. It does not infer effects from code; the architect \emph{structures} the workflow so that each leaf carries exactly one effect symbol from a finite signature $\Sigma_E$, formalized as the generic-effect operations of Plotkin and Power \citep{plotkin2003, bauer2015}. The one-effect-per-leaf rule is therefore a design constraint on the workflow, checked against the architect's declaration rather than inferred from source. The classical effect analyzes still help: they can flag leaves that likely carry more than one effect and so tighten the declared counts, serving as architect aids rather than as the checker itself.

FAOP also does more than validate declared structure: it \emph{normalizes} that structure with the algebra of Section~\ref{sec:algebra} to remove redundancy and expose parallelism, and stores the result in a registry with reverse-dependency indexing, so the impact of changing a shared step is a compile-time query: the organization-scale concern that the declarative-compilation and contract proposals above leave unaddressed.

\section{Limitations and Future Work}
\label{sec:future-work}

\textbf{Reflective planning.} Levels 2 and 3 (Section~\ref{sec:architectural-levels}) treat the planning leaf as atomic and assume any selected or generated FAOP is normalized statically, before execution. Lifting this to a reflective calculus (FAOPs as first-class runtime values, a runtime body-evaluation primitive, and normal-form checking performed on the fly with the compiler as the runtime guardrail) is the main open problem; it interacts with the completeness result (Section~\ref{sec:formal-framework}) and with bounded completion (+B).

\textbf{Tooling and evaluation.} The compiler, registry, executor, and SkillBase (Section~\ref{sec:toolchain}) and the empirical study (Section~\ref{sec:evaluation}) are described but not yet reported. A corpus analysis of real skill libraries (violation rates per normal form, the distribution of architectural levels, and measured optimization gains) remains to be carried out.

\textbf{Transactional atomicity.} Agents that move money, send messages, or write to external systems need an all-or-nothing commit guarantee, so that a partial failure never leaves external systems half-updated. A transactional extension (+T) that confines external writes to a terminal commit and buffers internal state until then is important for production agents and is planned as near-term work.

\textbf{Runtime decomposition.} A FAOP's typed call graph makes a previously intractable question answerable: which parts of a workflow should run in which runtime, when a large agent is split across runtimes for latency, cost, data residency, or autonomy level? A cut is cheap only where the dataflow crossing it is thin, so where to cut couples directly to the independence analysis (Definition~\ref{def:independence}): choosing these partitions is a promising new research area that the FAOP structure makes tractable.

\textbf{Finding copies.} FNF-3 finds exact copies with a suffix tree, and near copies only among registered FAOPs, to keep the cost low. Better algorithms could find more at similar cost: copies that differ only by a consistent renaming of fields, as dup finds code that differs by a systematic renaming of variables~\citep{baker1995duplication}; reordered copies, by hashing summed vectors of step kinds~\citep{jiang2007deckard}; and copies with added or removed steps, by local alignment~\citep{smith1981}.

\section{Conclusion}
\label{sec:conclusion}

At scale, as AOP libraries grow to thousands, three classes of defect emerge that current agent frameworks cannot answer: failures of integrity across a shared library, failures of isolation between steps and the tools and data they touch, and the absence of an algebra to safely refactor, optimize, bound, and govern whole AOP libraries. These are structurally the problems relational data faced before Codd, and prose-and-code skills have no theory to prevent them; that is what motivates FAOP.

We presented FAOP, a theory that applies database normalization to agent skills and AOPs: a typed functional core (Section~\ref{sec:formal-framework}); four progressive normal forms for explicit atomic steps, declared dataflow, canonical reuse over an acyclic call graph, and consistent resource/guardrail declarations (Sections~\ref{sec:fnf1} to~\ref{sec:fnf4}); meaning-preserving algebraic rewrites (Section~\ref{sec:algebra}); extensions for optimization, bounded completion, and reachability checks (Section~\ref{sec:extensions}); and an architectural-level classification that identifies where a workflow invokes AI and informs governance (Section~\ref{sec:architectural-levels}). Runtime confinement and guardrail enforcement require a conforming executor and suitable policies, not declarations alone. Together these turn an unstructured library into a registry a compiler can check, summarized by each FAOP's signature $(k,E,\lambda)$. Realized as SkillBase and a separate executor, FAOP gives AOPs a governed, queryable store with schemas, constraints, and shared definitions. A new agent can then be authored for a fixed executor rather than engineered separately. In our deployment, we informally observed new-agent development fall from weeks to hours; this is an anecdotal observation, not a controlled measurement.

\vspace{0.8em}
{\small\noindent\textbf{Disclaimer}\par\smallskip
\noindent This paper was prepared in part for information purposes by the Artificial Intelligence Research group of JPMorgan Chase \& Co.\ and its affiliates (``JP Morgan''), and is not a product of the Research Department of JP Morgan. JP Morgan makes no representation and warranty whatsoever and disclaims all liability, for the completeness, accuracy or reliability of the information contained herein. This document is not intended as investment research or investment advice, or a recommendation, offer or solicitation for the purchase or sale of any security, financial instrument, financial product or service, or to be used in any way for evaluating the merits of participating in any transaction, and shall not constitute a solicitation under any jurisdiction or to any person, if such solicitation under such jurisdiction or to such person would be unlawful. \copyright\ 2026 JPMorgan Chase \& Co. All rights reserved.\par}
\vspace{0.6em}

\bibliography{references}

\appendix

\section{Normalization Examples}
\label{app:normalization-examples}

\emph{Disclaimer.} Every example in this appendix is synthetic. It is constructed solely to show how normalization works and does not describe the internal procedure of any institution. The steps, tools, guardrails, and data are deliberately simplified, and in places chosen to exercise a particular FAOP feature rather than to mirror how the task would be built in practice; any resemblance to a real workflow is incidental.

This appendix works one running example end to end: a monthly account-statement workflow. We use \emph{AOP} terminology throughout, but the same procedure applies to any source artifact whose steps are defined to accomplish a task: an SOP, an AOP, or a framework skill.

The source artifact lists the steps needed to finish the task. The Skill Architect extracts those steps (by hand or with tool assistance) into two stores: a \emph{vocabulary} of canonical names and a \emph{registry} of FAOP tuples. Extraction rewrites each unit of prose as one of the constructors of the FAOP body language (the grammar of Section~\ref{sec:composite}); this is the \emph{making steps explicit} activity of FNF-1 (Section~\ref{sec:fnf1}). A plain step becomes an action, ordered steps become a sequence $\seqop$, a conditional becomes a branch $\ifop$, a repeated step becomes a loop $\loopop$, a question put to a person becomes an interrupt $\iota$, a save point becomes a checkpoint $\chi$, and a reference to an already-registered procedure becomes a call $\callop$. Each step of the source thus yields one FAOP. The leaves are then checked for atomicity (FNF-1), after which the compiler checks typing (FNF-2), canonical reuse (FNF-3), and scope confinement (FNF-4). Extensions (+O, +B, +R) and the signature close the example.

Each registry entry is a FAOP tuple of Definition~\ref{def:faop}: the identity $\text{id}$, the name $n$ and description $d$ used for discovery, the body $b$ that performs the work (a primitive without sub-bodies for a leaf, or a composite body of sub-FAOPs for a composite), and the scope $\Sigma=\langle \mathcal{T},\mathcal{G}\rangle$. In each tuple \texttt{S\_in} names the fields the FAOP reads and \texttt{S\_out} the fields it produces; no FAOP in this example has optional outputs, so \texttt{S\_opt} is omitted. Because the record only grows along a sequence (Convention~\ref{conv:accumulation}), \texttt{S\_out} lists only the new fields the FAOP adds; the record seen by the next step is the incoming state extended with them.

\subsection{The Source AOP}

The example artifact is a monthly account-statement workflow. It lists the steps needed to finish the task, the tools the agent may use, and the guardrails it must respect.

\begin{lstlisting}[basicstyle=\ttfamily\small]
AOP: monthly account statement
Goal: produce and deliver a customer's
      monthly account statement.

Inputs:
  account_id, customer_id,
  statement_period_start,
  statement_period_end

Tools the agent may use:
  core_banking    (MCP) - read account, status
  ledger_svc      (MCP) - read transactions,
                          accrued interest
  statement_store (MCP) - store the statement
  delivery_svc    (MCP) - send to the customer
  summarizer      (LLM) - write the summary

Steps:
  1. Read the account and confirm it is active.
  2. Read the period's transactions and the
     accrued interest.
  3. Compute the ending balance.
  4. If the balance is negative, attach an
     overdraft notice; otherwise attach none.
  5. Ask the customer for a delivery channel
     and address.
  6. Write a short customer-facing summary of
     the period's activity.
    7. Store the statement once, then deliver it.
      Retry only delivery after transient
      failures, for a bounded number of attempts,
      then record delivery as failed.

Guardrails:
  - The summary must not reveal sensitive
    account data.
  - It must stay on topic, keep a neutral
    professional tone, and state only facts
    supported by the account data; it must
    not fabricate figures or advice.
  - Use only the tools listed above.
\end{lstlisting}

The steps are prose, the data each step needs is implicit, and nothing here is machine-checkable. The architect extracts this into a vocabulary and a registry (by hand, or with a model proposing a draft that the architect reviews and corrects).

\subsection{The Vocabulary}

The architect first resolves every name the AOP uses into a canonical vocabulary. Each entry carries a short description, so that two teams cannot register the same concept under different names (the \texttt{client\_id} versus \texttt{id\_client} drift of Section~\ref{sec:formal-framework}). Fields also carry a type.

\begin{lstlisting}
fields:
  account_id : string
    - unique account identifier
  customer_id : string
    - unique customer identifier
  statement_period_start : date
    - first day of the statement period
  statement_period_end : date
    - last day of the statement period
  account_record : Account
    - account object read from core_banking
  account_status : {active, closed}
    - lifecycle status of the account
  statement_transactions : Transaction[]
    - transactions posted in the period
  accrued_interest : Money
    - interest accrued in the period
  ending_balance : Money
    - closing balance for the period
  overdraft_notice : Notice
    - overdraft notice (empty if not overdrawn)
  delivery_preference : DeliveryPreference
    - customer delivery channel and address
  activity_summary : string
    - customer-facing activity summary
  statement_id : string
    - identifier of the stored statement
  delivery_confirmed : bool
    - whether delivery succeeded
  delivery_reference : string
    - reference returned by delivery_svc
\end{lstlisting}

\begin{lstlisting}
tools (op_tool unless noted):
  core_banking    - read account and status
  ledger_svc      - read transactions, interest
  statement_store - store the statement
  delivery_svc    - deliver the statement
  summarizer      - write summary  (op_LLM)

guardrails:
  no_sensitive_data_in_summary
    - omit sensitive account data
  on_topic
    - stay on the statement topic
  neutral_tone
    - keep a neutral, professional tone
  no_fabrication
    - assert only facts the data supports
\end{lstlisting}

Every external step here goes through a tool; the executor never opens a database or a socket itself. So \texttt{core\_banking}, \texttt{ledger\_svc}, \texttt{statement\_store}, and \texttt{delivery\_svc} all carry the effect label $\opTool$, and \texttt{summarizer} carries $\opLLM$. Where each tool actually runs (an in-process function, a local MCP server, or a remote MCP endpoint) is a runtime binding chosen by the executor, not part of a FAOP's type or scope.

\subsection{The Registry: from AOP to FAOPs}

Following the grammar mapping of Section~\ref{sec:fnf1}, the architect (writing the body directly or reviewing a translator model's proposal) assigns each step a FAOP with its own id, and adds the root workflow \texttt{issue\_statement.v1} that sequences them; the branch of step~4 becomes an $\ifop$ and the delivery of step~7 becomes a $\loopop$. Every FAOP has a locally authored scope: its $\mathcal{T}$ declaration is checked against the resources named by its body and called FAOPs, so \texttt{FetchAccount.v1} lists only \texttt{core\_banking} while the root lists the resources used across its body. Each FAOP also declares local guardrails. The compiler propagates them along each route, so the effective set for a descendant includes \texttt{no\_sensitive\_data\_in\_summary} from the root without merging guardrails from independent callers. \emph{Display convention:} every reference to a registered FAOP is written as $\callop(\text{id})$, whether the target is an atomic leaf or a composite sub-workflow: a call is a call. The \texttt{checkpoint} primitive is written inline and is not a call; the delivery-preference interrupt is registered as \texttt{AskDeliveryPreference.v1} and invoked like any other FAOP, so its prompt has one governed definition.

\begin{lstlisting}[basicstyle=\ttfamily\small]
id: issue_statement.v1
n:  Issue Statement
d:  Produce and deliver a monthly statement.
body:
    seq(call(FetchAccount.v1),
        call(PullTransactionsAndInterest.v1),
        call(ComputeBalance.v1),
        call(OverdraftCheck.v1),
        call(AskDeliveryPreference.v1),
        call(SummarizeActivity.v1),
        checkpoint,
        call(SaveAndDeliverStatement.v1))
S_in:  {account_id, customer_id,
        statement_period_start,
        statement_period_end}
S_out: {account_record, account_status,
        statement_transactions,
        accrued_interest, ending_balance,
        overdraft_notice, delivery_preference,
        activity_summary, statement_id,
        delivery_confirmed, delivery_reference}
T = {core_banking, ledger_svc, statement_store,
     delivery_svc, summarizer}
G = {no_sensitive_data_in_summary}
\end{lstlisting}

\begin{lstlisting}
id: FetchAccount.v1
n:  Fetch Account
d:  Read account and verify status.
body: op_tool(core_banking)
S_in:  {account_id}
S_out: {account_record, account_status}
T = {core_banking}
G = {no_sensitive_data_in_summary}
\end{lstlisting}

\begin{lstlisting}
id: PullTransactionsAndInterest.v1
n:  Pull Transactions And Interest
d:  Read transactions and accrued interest.
body:
    op_tool(ledger_svc)   # transactions
    + op_tool(ledger_svc) # accrued interest
S_in:  {account_id, statement_period_start,
        statement_period_end}
S_out: {statement_transactions,
        accrued_interest}
T = {ledger_svc}
G = {no_sensitive_data_in_summary}
\end{lstlisting}

\begin{lstlisting}
id: ComputeBalance.v1
n:  Compute Balance
d:  Calculate the ending balance.
body: op_compute(compute balance)
S_in:  {statement_transactions,
        accrued_interest}
S_out: {ending_balance}
T = {}
G = {no_sensitive_data_in_summary}
\end{lstlisting}

\begin{lstlisting}
id: OverdraftCheck.v1
n:  Overdraft Check
d:  Attach an overdraft notice if needed.
body:
    if(ending_balance < 0,
       call(AddOverdraftNotice.v1),
       call(NoNotice.v1))
S_in:  {ending_balance}
S_out: {overdraft_notice}
T = {}
G = {no_sensitive_data_in_summary}
\end{lstlisting}

\begin{lstlisting}
id: AddOverdraftNotice.v1
n:  Add Overdraft Notice
d:  Emit a customer overdraft notice.
body: op_compute(make notice)
S_in:  {ending_balance}
S_out: {overdraft_notice}
T = {}
G = {no_sensitive_data_in_summary}
\end{lstlisting}

\begin{lstlisting}
id: NoNotice.v1
n:  No Notice
d:  Emit an empty notice.
body: op_compute(empty notice)
S_in:  {}
S_out: {overdraft_notice}
T = {}
G = {no_sensitive_data_in_summary}
\end{lstlisting}

\begin{lstlisting}
id: AskDeliveryPreference.v1
n:  Ask Delivery Preference
d:  Suspend for the customer's delivery choice.
body: interrupt(customer)
      "Ask how to receive the statement (email,
      postal mail, or in-app); return it as
      delivery_preference."
S_in:  {customer_id}
S_out: {delivery_preference}
T = {}
G = {no_sensitive_data_in_summary}
\end{lstlisting}

\begin{lstlisting}
id: SummarizeActivity.v1
n:  Summarize Activity
d:  Write a customer-facing summary.
body: op_LLM(summarizer)
      "Write a short, customer-facing summary
      of the period's activity."
S_in:  {account_record, statement_transactions,
        ending_balance}
S_out: {activity_summary}
T = {summarizer}
G = {no_sensitive_data_in_summary}
\end{lstlisting}

\begin{lstlisting}[basicstyle=\ttfamily\small]
id: SaveAndDeliverStatement.v1
n:  Save And Deliver Statement
d:  Store the statement and deliver it.
body:
  seq(op_tool(statement_store),
    loop(op_tool(delivery_svc),
       until: delivery_confirmed))
S_in:  {account_id, ending_balance,
        overdraft_notice, activity_summary,
        delivery_preference}
S_out: {statement_id, delivery_confirmed,
        delivery_reference}
T = {statement_store, delivery_svc}
G = {no_sensitive_data_in_summary}
\end{lstlisting}

Once the artifact is a vocabulary plus a registry of typed FAOP tuples, its structure is fixed before any code runs: name, meaning, body, dataflow contract, and per-FAOP scope are all declared.

\paragraph{Benefits.} The gain of this first transformation is \emph{integrity across the firm's library}, the first benefit of Table~\ref{tab:benefits}:
\begin{itemize}
\item \emph{From the vocabulary:} every field, tool, and guardrail resolves to one canonical, described name shared by the whole library, so the same concept cannot drift into \texttt{client\_id} in one workflow and \texttt{id\_client} in another. This shared naming is the foundation of firm-wide consistency and integrity: workflows that mean the same thing now say it the same way.
\item \emph{From the registry $\mathcal{R}$:} each step now has one place of record (an identity, typed \texttt{S\_in}/\texttt{S\_out}, and a declared scope), so a shared step is defined once, and a later change to it can be traced to every dependent workflow instead of silently breaking them.
\end{itemize}

\subsection{Six Primitives in One Workflow}

The six primitives appear in the expanded workflow. The display opens the store-and-deliver composite to show its sequence and loop; the combined ledger action is split next.

\begin{lstlisting}[basicstyle=\ttfamily\small]
seq(                              # Sequence
  call(FetchAccount.v1),                # Action
  call(PullTransactionsAndInterest.v1), # Coarse action
  call(ComputeBalance.v1),              # Value leaf
  if(ending_balance < 0,
     call(AddOverdraftNotice.v1),
     call(NoNotice.v1)),                # Branch
  call(AskDeliveryPreference.v1),       # Interrupt
  call(SummarizeActivity.v1),           # LLM action
  checkpoint,    # Checkpoint
  op_tool(statement_store),
  loop(op_tool(delivery_svc),
       until: delivery_confirmed)  # Loop
)
\end{lstlisting}

The sequence fixes the order; the branch tests \texttt{ending\_balance < 0}; the interrupt requests delivery details; and the checkpoint persists the state. Storage runs once, before the loop that retries only delivery. +B adds the loop cap and all waiting deadlines later.

\subsection{Making Each FAOP Atomic (FNF-1)}
\label{app:fnf1}

\paragraph{Before.} FNF-1 makes one $O(\|F\|)$ pass over each FAOP and flags multi-effect leaves. Here \texttt{PullTransactionsAndInterest.v1} combines two operations in one leaf, shown by \texttt{+}. The store-and-deliver FAOP is already a composite with separate action leaves and a delivery-only loop:

\begin{lstlisting}[basicstyle=\ttfamily\small]
PullTransactionsAndInterest.v1
  body:
    op_tool(ledger_svc)   # transactions
    + op_tool(ledger_svc) # accrued interest
\end{lstlisting}

\paragraph{After.} Following Lemma~\ref{lem:leaf-split} (Section~\ref{sec:fnf1}), the architect splits the ledger leaf into two single-effect leaves. The already separate storage and delivery actions are also registered, as FNF-1 requires of every leaf other than a checkpoint. Each of the four atomic FAOPs has a scope naming exactly its own tool:

\begin{lstlisting}
id: GetTransactions.v1
n:  Get Transactions
d:  Read the period's transactions.
body: op_tool(ledger_svc)
S_in:  {account_id, statement_period_start,
        statement_period_end}
S_out: {statement_transactions}
T = {ledger_svc}
G = {no_sensitive_data_in_summary}
\end{lstlisting}

\begin{lstlisting}
id: GetInterest.v1
n:  Get Interest
d:  Read the period's accrued interest.
body: op_tool(ledger_svc)
S_in:  {account_id, statement_period_start,
        statement_period_end}
S_out: {accrued_interest}
T = {ledger_svc}
G = {no_sensitive_data_in_summary}
\end{lstlisting}

\begin{lstlisting}
id: SaveStatement.v1
n:  Save Statement
d:  Store the rendered statement.
body: op_tool(statement_store)
S_in:  {account_id, ending_balance,
        overdraft_notice, activity_summary}
S_out: {statement_id}
T = {statement_store}
G = {no_sensitive_data_in_summary}
\end{lstlisting}

\begin{lstlisting}
id: SendStatement.v1
n:  Send Statement
d:  Deliver the saved statement.
body: op_tool(delivery_svc)
S_in:  {statement_id, delivery_preference}
S_out: {delivery_confirmed, delivery_reference}
T = {delivery_svc}
G = {no_sensitive_data_in_summary}
\end{lstlisting}

The root is the only caller of the ledger FAOP and of the store-and-deliver FAOP, so the architect writes their steps in the root directly and retires both FAOPs. The root body becomes:

\begin{lstlisting}
issue_statement.v1 body:
  seq(call(FetchAccount.v1),
      call(GetTransactions.v1),
      call(GetInterest.v1),
      call(ComputeBalance.v1),
      call(OverdraftCheck.v1),
      call(AskDeliveryPreference.v1),
      call(SummarizeActivity.v1),
      checkpoint,
      call(SaveStatement.v1),
      loop(call(SendStatement.v1),
           until: delivery_confirmed))
\end{lstlisting}

The source specifies one storage operation followed by delivery retries. Its translation therefore places storage outside the loop from the start; this is a requirement of faithful translation, not a loop-hoisting optimization. FNF-1 contributes the split of the combined ledger action, not a reordering of storage and delivery.

\texttt{ComputeBalance.v1} needs no split: it reads \texttt{statement\_transactions} and \texttt{accrued\_interest}, writes \texttt{ending\_balance}, and carries the effect label $\opCompute$, a statically validated pure function of its inputs.

Some of the benefits are as follows:
\begin{itemize}
\item \emph{Observability (from $\mathcal{R}$):} each leaf carries an id, schema, and effect label, so a trace identifies exactly which unit ran.
\item \emph{Isolated testing (intrinsic):} an atomic leaf is tested against its declared \texttt{S\_in}/\texttt{S\_out} without replaying the workflow.
\item \emph{Failure localization and retry (intrinsic):} a failure attaches to one named leaf rather than to an opaque paragraph.
\item \emph{LLM bypass (from the effect label):} non-$\opLLM$ leaves such as the two \texttt{ledger\_svc} reads dispatch directly and deterministically, so only \texttt{SummarizeActivity.v1} calls the model.
\item \emph{Enabling:} atomic, effect-labeled leaves are the precondition for FNF-2 to FNF-4 and for +O parallelization.
\end{itemize}

\subsection{Checking Dataflow (FNF-2)}
\label{app:fnf2}

FNF-2 creates no FAOPs. It checks that each FAOP is called only after the fields in its \texttt{S\_in} are already present in the running record. This is where the ordering of the body matters.

\paragraph{Before.} Suppose the architect had ordered the body so that \texttt{SaveStatement.v1} runs before \texttt{SummarizeActivity.v1}:

\begin{lstlisting}
issue_statement.v1 body (rejected order):
  seq(call(FetchAccount.v1),
      call(GetTransactions.v1),
      call(GetInterest.v1),
      call(ComputeBalance.v1),
      call(OverdraftCheck.v1),
      call(AskDeliveryPreference.v1),
      call(SaveStatement.v1),       # too early
      call(SummarizeActivity.v1),
      loop(call(SendStatement.v1),
           until: delivery_confirmed))
\end{lstlisting}

\texttt{SaveStatement.v1} declares \texttt{S\_in = \{account\_id, ending\_balance, overdraft\_notice, activity\_summary\}}, but \texttt{activity\_summary} is produced only by \texttt{SummarizeActivity.v1}, which runs later. The compiler rejects the order:

\begin{lstlisting}
COMPILE ERROR in SaveStatement.v1:
  missing field activity_summary
  required S_in: {account_id, ending_balance,
                  overdraft_notice,
                  activity_summary}
  available so far: account_id, customer_id,
    statement_period_start,
    statement_period_end, account_record,
    account_status, statement_transactions,
    accrued_interest, ending_balance,
    overdraft_notice, delivery_preference
\end{lstlisting}

\paragraph{After.} Moving \texttt{SummarizeActivity.v1} before \texttt{SaveStatement.v1} makes every read available before use, and the body passes. The same rule applies to predicates: the branch condition \texttt{ending\_balance < 0} may be evaluated only after \texttt{ComputeBalance.v1} has produced \texttt{ending\_balance}.

Some of the benefits are as follows:
\begin{itemize}
\item \emph{Type safety (from the compiler):} the missing-field defect is caught before deployment instead of surfacing at runtime, where an agent would improvise the absent \texttt{activity\_summary}.
\item \emph{Auto-parallelization (enabling, from the schemas in $\mathcal{R}$):} the same typed \texttt{S\_in}/\texttt{S\_out} contracts feed the independence test that +O later uses to run leaves concurrently.
\end{itemize}

\subsection{Removing Duplication (FNF-3)}
\label{app:fnf3}

The firm's library holds more than one statement workflow.

\paragraph{Before.} Three workflows (\texttt{issue\_statement.v1}, \texttt{monthly\_report.v1}, and \texttt{on\_demand\_statement.v1}) inline the same three-step sub-workflow that builds the statement's financial facts:

\begin{lstlisting}
seq(call(GetTransactions.v1), call(GetInterest.v1),
    call(ComputeBalance.v1))
\end{lstlisting}

\paragraph{After.} The FNF-3 search finds this repeat and lists it as a candidate. Extraction is not automatic: the architect (optionally advised by a model, but making the final call) decides each pair of copies, weighing reuse, domain meaning, and ownership. This sub-workflow does the same job in all three workflows, so it is registered once, with its own scope, the union of the tools its three leaves use:

\begin{lstlisting}
id: build_statement_facts.v1
n:  Build Statement Facts
d:  Read transactions and interest,
    then compute the balance.
body:
    seq(call(GetTransactions.v1),
        call(GetInterest.v1),
        call(ComputeBalance.v1))
S_in:  {account_id, statement_period_start,
        statement_period_end}
S_out: {statement_transactions,
        accrued_interest, ending_balance}
T = {ledger_svc}
G = {no_sensitive_data_in_summary}
\end{lstlisting}

Each caller now invokes it instead of repeating its steps:

\begin{lstlisting}
issue_statement.v1 body:
  seq(call(FetchAccount.v1),
      call(build_statement_facts.v1),
      call(OverdraftCheck.v1),
      call(AskDeliveryPreference.v1),
      call(SummarizeActivity.v1),
      checkpoint,
      call(SaveStatement.v1),
      loop(call(SendStatement.v1),
           until: delivery_confirmed))
monthly_report.v1:
  seq(call(build_statement_facts.v1), ...)
on_demand_statement.v1:
  seq(call(build_statement_facts.v1), ...)
\end{lstlisting}

Some of the benefits are as follows:
\begin{itemize}
\item \emph{Reuse without duplication (from $\mathcal{R}$):} the sub-workflow is defined once and called everywhere, so the copies can no longer drift apart.
\item \emph{Single test and observability boundary (from $\mathcal{R}$):} the canonical composite has one identity, so it is tested and versioned in a single place, and its tracing and logging are configured once and standardized across the registry, so every invocation is observed the same way and every caller reuses the same governed unit.
\item \emph{Integrity across workflows (from $\mathcal{R}$):} balance logic changes in one place; every affected workflow is notified and reviewed again, and the compiler rejects structural or interface incompatibilities.
\item \emph{Impact analysis (from $\mathcal{R}$):} the reverse-dependency index returns every workflow that calls the canonical FAOP.
\end{itemize}

\subsection{Checking Scope Confinement (FNF-4)}
\label{app:fnf4}

{
Each FAOP's $\mathcal{T}(F)$ and $\mathcal{G}(F)$ are authored by the architect. FNF-4 checks the resource declaration against the recursively expanded body and compiles local guardrails along routes; it neither rewrites bodies nor infers undeclared guardrails.

\paragraph{Before.} The architect-authored $\mathcal{T}(F)$ may omit a resource named by an action leaf or called FAOP, or include one unused by the expanded body. Local guardrail declarations have not yet been propagated along each route. For a shared FAOP, merging declarations from independent callers would mix their contexts.

\paragraph{After.} The compiler derives each expected resource set bottom-up from all action leaves in each body tree and the recursively computed sets of called FAOPs, then reports missing and extra resources against the architect-authored $\mathcal{T}$. It propagates each local $\mathcal{G}$ downward separately along every route. The listing shows the declared resource sets; each composite's $\mathcal{T}$ equals the resources in its recursively expanded body.
}

\begin{lstlisting}
issue_statement.v1
  T = {core_banking, ledger_svc,
       statement_store, delivery_svc,
       summarizer}
  |- FetchAccount.v1       T = {core_banking}
  |- build_statement_facts.v1
  |    T = {ledger_svc}
  |    |- GetTransactions.v1 T = {ledger_svc}
  |    |- GetInterest.v1     T = {ledger_svc}
  |    |- ComputeBalance.v1  T = {}
  |- SummarizeActivity.v1    T = {summarizer}
  |- SaveStatement.v1        T = {statement_store}
  |- SendStatement.v1        T = {delivery_svc}
\end{lstlisting}

{
The displayed sets satisfy FNF-4-T: for example, $\mathcal{T}(\texttt{build\_statement\_facts.v1})=\{\texttt{ledger\_svc}\}$, while the root lists the union of resources used throughout its body. These are body-footprint summaries, not permissions granted to children. The omitted \texttt{OverdraftCheck.v1}, \texttt{AskDeliveryPreference.v1}, and checkpoint are tool-free.

The root locally declares $\mathcal{G}(\texttt{issue\_statement.v1})=\{\texttt{no\_sensitive\_data\_in\_summary}\}$, while \texttt{SummarizeActivity.v1} locally declares its text-generation guardrails:
}

\begin{lstlisting}
id: SummarizeActivity.v1
G = {on_topic, neutral_tone, no_fabrication}
\end{lstlisting}

{
If a new action makes \texttt{SummarizeActivity.v1} use \texttt{delivery\_svc} while its authored $\mathcal{T}$ remains unchanged, the derived resource set changes and the compiler reports the mismatch:
}

\begin{lstlisting}[basicstyle=\ttfamily]
FNF-4-T MISMATCH in SummarizeActivity.v1:
  declared T = {summarizer}
  derived  T = {delivery_svc, summarizer}
  missing  = {delivery_svc}
\end{lstlisting}

{
On the route $\pi=\texttt{issue\_statement.v1}\to\texttt{SummarizeActivity.v1}$, the compiled guardrail set is
\[
\begin{aligned}
\mathcal{G}^{*}(\pi)=\{&\texttt{no\_sensitive\_data\_in\_summary},\\
&\texttt{on\_topic},\ \texttt{neutral\_tone},\\
&\texttt{no\_fabrication}\}.
\end{aligned}
\]
A different caller of a shared FAOP receives a separately computed set.
}

Some of the benefits are as follows:
\begin{itemize}
\item \emph{Scope isolation:} FNF-4 checks declarations. A conforming executor mediating all resource access then confines \texttt{SummarizeActivity.v1} to its approved $\mathcal{T}=\{\texttt{summarizer}\}$, denying access to \texttt{delivery\_svc}; the declaration alone does not enforce this.
\item \emph{Compiler and runtime policy:} resource mismatches are reported before deployment, and guardrail declarations remain separate per route. Their semantic enforcement requires suitable runtime checks; propagation alone does not prove that a generated summary obeys them.
\item \emph{From the guardrails:} the compiled set on the displayed route contains both the root's \texttt{no\_sensitive\_data\_in\_summary} and \texttt{SummarizeActivity.v1}'s local text-generation guardrails.
\end{itemize}

\subsection{Extensions on the Same Example}
\label{app:extensions}

Extensions annotate existing FAOPs; +B also adds a bounded exhaust-handler FAOP.

\paragraph{+O Optimized.} The body of \texttt{build\_statement\_facts.v1} sequences \texttt{GetTransactions.v1}, \texttt{GetInterest.v1}, and \texttt{ComputeBalance.v1}. Its first two leaves are independent (\texttt{GetTransactions.v1} writes only \texttt{statement\_transactions}, \texttt{GetInterest.v1} writes only \texttt{accrued\_interest}, and neither reads a field the other writes), so +O runs them in parallel by the commutativity law (L3):

\begin{lstlisting}
build_statement_facts.v1 body:
  seq(par(call(GetTransactions.v1),
          call(GetInterest.v1)),
      call(ComputeBalance.v1))
\end{lstlisting}

The two ledger calls also require declared read-only access under a common snapshot or fixed statement-period version. FNF-2 schemas alone do not establish this external-resource condition. Under those contracts they may run concurrently, and \texttt{ComputeBalance.v1} still receives both outputs before it runs; otherwise the compiler keeps their order.

+O also fuses independent interrupts. The example asks the customer only once, so nothing fuses here; but if \texttt{AskDeliveryPreference.v1} were split into two questions (one for the channel, one for the postal address) with disjoint answer keys, +O would merge them into a single customer prompt, turning two round-trips into one.

\paragraph{+B Bounded.} The delivery loop receives a maximum iteration count and an \emph{exhaust handler}, which reports unsuccessful delivery if the cap is reached:

\begin{lstlisting}[basicstyle=\ttfamily\small]
id: MarkDeliveryFailed.v1
n:  Mark Delivery Failed
d:  Record exhausted delivery attempts.
body: op_compute(mark failed)
S_in:  {statement_id}
S_out: {delivery_confirmed, delivery_reference}
T = {}
G = {no_sensitive_data_in_summary}

loop(call(SendStatement.v1),
     until: delivery_confirmed,
     max_iterations: 3,
     on_exhaust: call(MarkDeliveryFailed.v1))

Latency limits for this synthetic example:
  FetchAccount.v1:       2s
  GetTransactions.v1:    2s
  GetInterest.v1:        2s
  SummarizeActivity.v1:  4s
  SaveStatement.v1:      2s
  SendStatement.v1:      5s
  every op_compute leaf: 0.1s
  checkpoint:           1s
  interrupt response deadline: 24h
  control/dispatch/predicate work:
    at most 0.01s per visit

Token limits:
  SummarizeActivity.v1: 800 total billable tokens
  all other work:      0 tokens

Timeout policy: bounded failure reporting;
no implicit retry outside the capped loop.
\end{lstlisting}

These illustrative limits must cover the whole invocation, including cancellation and failure reporting, and be enforceable by the executor and the model/service providers. The three delivery attempts contribute at most $15\mathrm{s}$, to which the compiler adds loop/control overhead and the $0.1\mathrm{s}$ exhaust handler. The whole-workflow bound also includes the customer deadline, the other leaves, checkpoint persistence, and frame/dispatch overhead. Omitting any of these would not justify +B.

\paragraph{+R Reachable.} The obligation of part (i) is the reachability of both overdraft branches through the ledger reads and the balance computation, not merely the satisfiability of \texttt{ending\_balance < 0}. For this synthetic example, the architect supplies entry and transition contracts admitting both negative and nonnegative balances, and bounded model checking must find a witness run for each arm. The +R annotation is conditional on those witnesses; without them, it is withheld. Part (ii) needs no contracts: every pure step in the example writes a field that a later step reads or the workflow returns; for example, \texttt{ending\_balance} is read by the overdraft branch and by \texttt{SaveStatement.v1}. The delivery loop's stopping limit is not part of +R: it belongs to +B.

\subsection{Algebraic Laws as Tutorial Rewrites}
\label{app:laws}

The algebra of Section~\ref{sec:algebra} explains why these rewrites are not merely engineering heuristics. Every FAOP named below is already in the registry built above.

\paragraph{L1 Identity: empty-step elimination.} The law $\seqop(\nopsym,f)=f$ permits insertion or removal of the empty do-nothing step, which changes neither the record nor any external state. Logging, metrics, and checkpoints have observable effects even when their \texttt{S\_out} is empty, so L1 does not permit removing them.

\paragraph{L2 Associativity: safe extraction.} Let $T$, $I$, and $B$ abbreviate \texttt{GetTransactions.v1}, \texttt{GetInterest.v1}, and \texttt{ComputeBalance.v1}. Associativity makes the grouping of a sequential composition irrelevant:
\[
\seqop(\seqop(T,I),B)=\seqop(T,\seqop(I,B)),
\]
so the three-step sub-workflow $\seqop(T,I,B)$ is well-defined independently of how it is bracketed. This is what licenses treating the whole sub-workflow as a single unit and registering it as the composite \texttt{build\_statement\_facts.v1} under FNF-3, after which it is invoked as one $\callop$ step.

\paragraph{L3 Commutativity: auto-parallelization.} Since \texttt{GetTransactions.v1} writes \texttt{statement\_transactions}, \texttt{GetInterest.v1} writes \texttt{accrued\_interest}, and neither reads the other's output,
\[
\seqop(T,I)=\parop(T,I).
\]
With the declared read-only ledger snapshot contract, the executor may run them concurrently without changing the logical result.

\paragraph{L4 Branch distribution: shared-step factoring.} A shared continuation can sit after the branch or be distributed into both arms:
\[
\seqop(\ifop(p,N,Z),R)=\ifop(p,\seqop(N,R),\seqop(Z,R)).
\]
For the example, $p$ is \texttt{ending\_balance < 0}, $N$ is \texttt{AddOverdraftNotice.v1}, $Z$ is \texttt{NoNotice.v1}, and $R$ can be \texttt{SaveStatement.v1} in a surrounding state where the other inputs of \texttt{SaveStatement.v1} are already available. The two forms save a statement whose \texttt{overdraft\_notice} comes from the chosen branch; read right to left, the law lets a step that ends both arms be written once, after the branch.

\paragraph{L5 Fold/unfold: canonical reuse.} For $C=\mathcal{R}[\texttt{build\_statement\_facts.v1}]$, L5 equates its call with $\operatorname{frame}_{C}(b(C))$, not with an unframed copy. Once L2 groups $\seqop(T,I,B)$, extraction declares the inputs and every output used by the caller. Unfolding keeps that same frame: it passes only $\Sin(C)$ and returns only $\Sout(C)$, leaving intermediate fields local. With this boundary preserved, folding and unfolding compute the same result.

\subsection{The Signature of the Example}
\label{app:signature}

With the +B and read-only snapshot contracts enforced and the stated +R obligations discharged, and any remaining FNF-3 candidate pairs, such as \texttt{GetTransactions.v1} and \texttt{GetInterest.v1} should their similarity reach the threshold, declared distinct, the synthetic workflow has signature
\[
(k,E,\lambda)=(4,\{+O,+B,+R\},1).
\]
Here $k=4$ because the workflow satisfies FNF-1 through FNF-4: leaves are atomic, dataflow is typed, shared sub-workflows are canonical, and FNF-4 matches resource declarations to body footprints while compiling guardrails per route. The extension set records checked optimizations (+O), bounded completion or failure with time/cost limits (+B), and reachability checks (+R). The architectural level is $\lambda=1$ because the graph and routing are fixed and computable, while one leaf, \texttt{SummarizeActivity.v1}, invokes an LLM as a tool. If that leaf is removed, the same workflow becomes Level 0.

\subsection{Does This Workflow Need AI?}
\label{app:need-ai}

Read as prose, the source AOP looks like an AI task: it is handed to an agent, and the reflex is to run the whole thing through a language model. Normalization shows that reflex to be wrong. \textbf{Exactly one leaf carries the effect label $\opLLM$}: \texttt{SummarizeActivity.v1}. Every other leaf is a tool call, a value computation, an interrupt, or a checkpoint, and the only routing predicate, \texttt{ending\_balance < 0}, is an ordinary boolean over the state. This is why the level is $\lambda=1$ rather than something higher: the graph and routing are fixed and computable, and the model is used as a tool at a single point.

\keybox{The lesson runs against the habit of putting a language model in charge of everything. The statement is fetched, computed, stored, and delivered by deterministic steps that need no model, cost no tokens, and are fully reproducible; only the customer-facing prose genuinely calls for generation. Making that boundary explicit lets a firm reserve model spend, latency, and model-risk review for the one leaf that needs them and run the rest as a plain, auditable program. Replace that leaf with a templated summary and the workflow drops to $\lambda=0$: no AI at all.}

\subsection{Adding the Next AOP}

One normalized workflow is only the start. Suppose a second artifact arrives, an \texttt{account\_closure} AOP that closes an account and issues a final statement. It is normalized against the \emph{same} two stores: its steps resolve to the existing vocabulary, so it reads \texttt{account\_id} and \texttt{ending\_balance} rather than inventing \texttt{acct\_no} or \texttt{closing\_bal}, and FNF-3 makes it a caller of the registered \texttt{FetchAccount.v1} and \texttt{build\_statement\_facts.v1} rather than a fresh copy:

\begin{lstlisting}
account_closure.v1 body:
  seq(call(FetchAccount.v1),
      call(build_statement_facts.v1),
      call(IssueFinalStatement.v1),
      call(CloseAccount.v1))
\end{lstlisting}

The new body also repeats the two steps \texttt{call(FetchAccount.v1)} and \texttt{call(build\_statement\_facts.v1)} of \texttt{issue\_statement.v1}, so the next FNF-3 round reports this pair, and the architect factors it or declares it distinct.
Repeating this across the firm's AOPs creates one registry over one vocabulary. Each shared sub-workflow has one canonical definition, and the reverse-dependency index identifies every affected workflow when it changes. All affected workflows and their owners are notified and must review the change again before adopting it. The compiler catches structural and interface errors; business-meaning changes still require tests and review.

\subsection{Exporting}
\label{app:exporting}

A normalized FAOP is exported in either of the two forms of Section~\ref{sec:toolchain}:

\begin{itemize}
\item \emph{Blueprint}: the full form. A FAOP-aware executor materializes \texttt{issue\_statement.v1} into a runnable graph, binding \texttt{core\_banking}, \texttt{ledger\_svc}, \texttt{statement\_store}, and \texttt{delivery\_svc}, registering the interrupt and checkpoint handlers, and enforcing each leaf's scope at run time. This form carries every guarantee, including runtime scope confinement and effect-based LLM bypass.
\item \emph{Adapted AOP}: the degraded form. An adapter converts the same blueprint into an AOP in a target executor's own format, so an existing framework can run it directly. It \emph{keeps} the design-time structure (atomic typed steps, canonical reuse, and a clean description for discovery) and keeps the runtime guarantees, chiefly scope confinement and LLM bypass, to the extent that the target executor can enforce them.
\end{itemize}

The adapted AOP is \emph{generated from the registry}, not hand-maintained. When a shared FAOP changes, each affected workflow is notified and reviewed again before its updated blueprint or adapted export is deployed. Regeneration from the approved canonical source prevents copy drift; it does not replace review of changed business meaning. The adapter remains a backward-compatible path for existing runtimes, retaining the checked structure while runtime guarantees depend on what the target executor enforces.

\end{document}